%% file: neurips_2026.tex
\documentclass{article}

\PassOptionsToPackage{round}{natbib}

\usepackage[preprint]{neurips_2026}

\usepackage[utf8]{inputenc}                         
\usepackage[T1]{fontenc}                            
\usepackage[dvipsnames]{xcolor}                     
\usepackage[pdfborder={0 0 0}]{hyperref}            
\usepackage{url}            
\usepackage{booktabs}       
\usepackage{amsfonts}       
\usepackage{nicefrac}       
\usepackage{microtype}      

\usepackage{mathtools}
\usepackage{amssymb}
\usepackage{amsthm}
\usepackage{thmtools}
\usepackage{xspace}

\input{notation}

\input{theoremenvs}

\title{Kernel-based guarantees for nonlinear parametric models in Bayesian optimization}

\author{%
  Rafael Oliveira\\
  The Commonwealth Scientific and Industrial Research Organisation (CSIRO)\\
  Sydney, Australia\\
  \texttt{rafael.dossantosdeoliveira@csiro.au} \\
}

\begin{document}

\maketitle

\begin{abstract}
    Modern Bayesian optimization and adaptive sampling methods increasingly rely on nonlinear parametric models, yet theoretical guarantees for such models under adaptive data collection remain limited. Existing analyses largely focus on Gaussian processes, kernel machines, linear models, or linearized neural approximations, leaving a gap between theory and the nonlinear models used in practice. We develop a kernel-based framework for analyzing regularized nonlinear parametric models trained on adaptively collected data. Our approach uses kernels over the parameter space to induce reproducing-kernel Hilbert space structures over the corresponding model class, yielding confidence bounds for models trained with broad classes of regularized convex losses. We show how these bounds can support convergence guarantees for nonlinear acquisition and surrogate models, including randomized regularized policies that select points by maximizing a trained random model. These results provide a unified route to analyzing nonlinear parametric models in Bayesian optimization and related adaptive optimization settings.
\end{abstract}

\section{Introduction}
Bayesian optimization (BO) and adaptive experimental design provide principled frameworks for the optimization of expensive black-box functions by selecting data points sequentially in resource-constrained settings \citep{Rainforth2024}, which makes them attractive for problems where observations require costly simulations, physical experiments, or human evaluation \citep{Shahriari2016}.
The adaptive nature of the data collection process allows for choosing data points which are most informative for learning the model, typically improving data efficiency and reducing costs \citep{Greenhill2020, Rainforth2024}. However, this adaptiveness also introduces dependencies that invalidate typical assumptions of independent and identically distributed (\iid) data in classic statistical learning theory.
As a result, standard \iid learning-theoretic arguments do not directly apply, and performance guarantees typically require tools from bandits \citep{Srinivas2010, Chowdhury2017} and derivations based on martingales and self-normalized stochastic processes \citep{Pena2009article, Abbasi-Yadkori2011}.

Most existing regret analyses for BO rely on Gaussian processes \citep{Rasmussen2006}, kernelized models \citep{Chowdhury2017}, linear models \citep{Russo2014}, or linearized neural approximations \citep{Dai2022}. These assumptions enable tractable uncertainty estimates and information-theoretic arguments, but they do not fully capture many nonlinear parametric models used in modern machine learning, such as (finite-width) deep neural networks. This gap becomes especially relevant for likelihood-free Bayesian optimization (LFBO) methods \citep{Song2022a}, which learn acquisition functions or utility-based decision rules directly from data, often using flexible parametric models such as neural networks \citep{Tiao2021, Steinberg2024}, rather than deriving acquisitions from an explicit Gaussian process (GP) surrogate. Despite their practical appeal, regret guarantees for such nonlinear finite-dimensional acquisition models remain limited.

This paper develops a kernel-based framework for analyzing regularized nonlinear parametric models trained on adaptively collected data. Our starting point is to view the parametric model class
\begin{equation*}
    \mdlspace=\{\model(\placeholder,\parameters):\parameters\in\paramspace\}    
\end{equation*}
through a reproducing kernel Hilbert space (RKHS, \citealp{Scholkopf2001book}) induced by a positive-semidefinite kernel over the parameter space. This construction gives a Hilbert space geometry for nonlinear models, allowing regularization, functional gradients, and prediction errors to be controlled using RKHS tools. Moreover, under smoothness conditions, the induced model-dependent kernel can be dominated by standard kernels, such as Matérn kernels \citep[Ch. 4]{Rasmussen2006}, leading to GP-style predictive-variance bounds, even though the learned model is not itself a Gaussian process or kernel machine.

We then apply this framework to randomized regularized policies. At each round, the model is trained with a regularization penalty around a fresh random initialization, and the next point is selected greedily by maximizing the trained model. This construction is closely related to randomize-then-optimize methods and provides an approximate Thompson-sampling mechanism for nonlinear parametric models. The random initialization is closely related to how models are deployed in practice and plays two key roles: it induces exploration, and, under sufficient regularization, it stabilizes the trained model around the initialized function.

Our main formal regret result focuses on the finite-domain approximate Thompson-sampling setting, where the randomized model is trained as a surrogate for the objective itself. We prove that, under realizability, a unique optimum, and prior regularity assumptions expressed in the RKHS-induced pseudometric, the algorithm obtains a sublinear cumulative regret. The proof combines three ingredients: a self-normalized generalization bound for the randomized regularized estimator, an effective small-ball condition ensuring that favourable initializations occur often enough, and a variance-decay argument showing that repeated visits to the optimum shrink a GP-style uncertainty term appearing in the regret decomposition.

Our results provide a first regret guarantee for randomized nonlinear parametric models in an LFBO-related setting, moving beyond analyses based on GP surrogates, kernel machines, or linearized neural models. Beyond this specific setting, the framework suggests a route for developing principled non-GP-based Bayesian optimization, active learning, and adaptive experimental design methods where flexible parametric models admit kernel-based uncertainty and regret analyses.



\section{Problem setting and background}
We consider an adaptive data collection setting, as it is the typical case in Bayesian optimization, bandits and general adaptive experimental design problems. In the following, we describe the basic problem formulation, state our assumptions on the training objectives for the models, and connect the formulation with existing frameworks.

\subsection{Adaptive data collection}
At each round $\iteridx \in \N$, an algorithm selects a point $\location_\iteridx \in \domain$ based on a model $\model_{\iteridx-1}: \domain\to\obsspace$ and then observes $\observation_\iteridx \in \obsspace$, updating a dataset $\dataset_\iteridx := \braces{(\location_\iteridx,\observation_\iteridx)} \cup \dataset_{\iteridx-1}$. We consider parametric models $\model: \domain\times\paramspace \to \obsspace$, where the parameters at round $\iteridx \geq 1$ are chosen by minimizing an empirical loss function with respect to $\model_\parameters := \model(\placeholder, \parameters)$:
\begin{equation}
    \Loss_\iteridx(\model_\parameters) = \regfun_\iteridx(\model_\parameters) + \sum_{i=1}^\iteridx \loss(\model_\parameters(\location_i), \observation_i), \qquad \parameters\in \paramspace, \quad \iteridx\in\N,
    \label{eq:loss}
\end{equation}
where $\loss: \obsspace\times\obsspace\to\R$ defines a pointwise approximation loss with respect to the data, and $\regfun_\iteridx(\model_\parameters)$ is a regularization penalty, such as $\ell_2$ regularization $\regfun_\iteridx(\model_\parameters) = \regfactor_\iteridx \norm{\parameters}_2^2$, for a given regularization factor $\regfactor_\iteridx > 0$. Our analysis will focus on a compact domain $\domain$ and scalar-valued observations $\obsspace\subseteq\R$, though most of our results can be extended to the vector-valued setting after mild adjustments. 

\subsection{Loss functions}
We consider strictly convex smooth loss functions, which are common in the machine learning literature. Formally, we make the following assumption on the individual loss functions:

\begin{assumption}[Loss]
\label{a:loss}
    The loss function $\loss_\iteridx: \obsspace\times\obsspace\to\R$ is such that the map $\loss_\observation := \anyscalar \mapsto \loss(\anyscalar, \observation)$ is twice differentiable and strongly convex with $\ddot\loss_\observation(\anyscalar) \geq \lossfactor > 0$, for all $\observation, \anyscalar\in\obsspace$.
\end{assumption}

Most loss functions commonly found in the ML literature are strongly convex, at least when restricted to model outputs lying in a compact interval. A classic example includes the cross-entropy loss:
\begin{equation}
    \loss_{\mathrm{CE}}(\model(\location), \observation) = -\observation \log \model(\location) - (1-\observation)\log(1-\model(\location)),
\end{equation} 
whose second derivative is $\ddot\loss_\observation(\anyscalar) = \frac{\observation}{\anyscalar^2} + \frac{1-\observation}{(1-\anyscalar)^2} > 0$, which is strongly convex whenever $\anyscalar$ remains within a compact subset of the open unit interval $(0,1)$ (e.g., when we model log-probabilities). Another example is the squared error loss:
\begin{equation}
    \loss_{\mathrm{SE}}(\model(\location),\observation) = \frac{1}{2}(\model(\location) - \observation)^2,
    \label{eq:objective}
\end{equation}
which is strongly convex by construction, with $\lossfactor = 1$.

\subsection{Bayesian optimization}
The driving example framework we consider is Bayesian optimization. However, parallels can be drawn to multi-armed bandits and active learning. In the BO setting, we seek to estimate the global optimum of an objective function $\objective: \domain \to \R$,
\begin{equation}
    \location^\star \in \argmax_{\location\in\domain} \objective(\location).
\end{equation}
We are given a maximum budget of function evaluations $\niter \geq 1$, selecting $\location_\iteridx \in \domain$ and observing $\observation_\iteridx = \objective(\location_\iteridx) + \obsnoise_\iteridx$, where $\obsnoise_\iteridx$ represents zero-mean observation noise, for $\iteridx \in \range{\niter}$. To derive their point selection strategy for $\location_\iteridx$, BO frameworks usually place a prior distribution over the objective function, such as a Gaussian process $\objective \sim \gp(0,\kernel)$, given a positive-semidefinite kernel $\kernel:\domain\times\domain\to\R$. Then decisions are made by maximizing an acquisition function $\af_\iteridx: \domain\to\R$, 
\begin{equation}
    \location_\iteridx \in \argmax_{\location\in\domain} \af_{\iteridx-1}(\location), \quad \iteridx \in \range{\niter}.
\end{equation}
The acquisition function is generally formulated as an expected utility $\af_\iteridx(\location) = \expectation[\utility_\iteridx(\observation) \mid \location, \dataset_{\iteridx-1}]$, where $\utility_\iteridx:\obsspace\to\R$ is a non-negative utility function, such as improvement $\utility_\iteridx(\observation) := \max\braces{\observation - \threshold_\iteridx, 0}$ over a threshold $\threshold_\iteridx \in \R$, e.g., $\threshold_\iteridx := \max_{i\leq\iteridx}\observation_i$. The expectation is set to marginalize both epistemic and aleatoric uncertainty stemming, respectively, from the model and the observation noise. Alternatively, other popular acquisition functions are not directly expected utilities but work surprisingly well in practice, such as Thompson sampling, $\af_\iteridx = \hat\objective_\iteridx$, where $\hat\objective_\iteridx$ is a sample from the GP posterior given the available data \citep{Russo2014}, and the upper confidence bound $\af_\iteridx(\location) = \expectation[\objective(\location)\mid \dataset_\iteridx] + \beta_\iteridx \sqrt{\variance[\objective(\location)\mid\dataset_\iteridx]}$, where $\beta_\iteridx > 0$ is a confidence parameter, and $\variance[\placeholder]$ represents the variance. We refer the reader to \citet{Garnett2022book} for further background on BO.

Beyond the classic GP-based formulation, BO methods have been extended to use other types of models, such as Bayesian neural networks \citep{Snoek2012, Li2024bnnbo}, and likelihood-free Bayesian optimization (LFBO) methods \citep{Tiao2021, Song2022a, Steinberg2024, Oliveira2026generative}. The latter bypass the need for an explicit surrogate for the objective function $\objective$ and instead model the acquisition function $\af_\iteridx$ directly by fitting a model $\model_\iteridx \approx \af_\iteridx$ according to observation-based utilities $\dataset_\iteridx^\utility := \braces{(\location_i, \utility_\iteridx(\observation_i))}_{i=1}^\iteridx$. In this case, one uses utilities $\utility_\iteridx$ which can be directly computed from the data, e.g., $\utility_\iteridx(\observation) = \indicator[\observation \geq \threshold_\iteridx]$, where $\threshold_\iteridx$ is typically an empirical quantile of the marginal observations distribution \citep{Tiao2021}. We mainly focus on the LFBO setting, though our results are also applicable to methods that model the objective function using a parametric surrogate, such as neural Thompson sampling \citep{Zhang2021nts}.

\section{RKHS view of nonlinear models}
To analyze nonlinear parametric models under adaptive data settings, we need a function space in which distances, gradients, and confidence widths can be controlled. Hilbert spaces are particularly useful for this purpose because they provide an inner product geometry, allowing gradient-based arguments and self-normalized concentration inequalities \citep{Abbasi-Yadkori2012, Martinez-Taboada2026}. Since our observations involve model evaluations $\model(\location, \parameters)$, we also require point evaluation to be continuous. Hilbert spaces of functions satisfying this property are precisely reproducing kernel Hilbert spaces \citep[Def. 4.18]{Steinwart2008}, 
which motivates embedding the model class into an RKHS.

\paragraph{Model-induced RKHS.}
To analyze the learned model's approximation error, we embed the parametric model class:
\begin{equation}
    \mdlspace := \braces{\model(\cdot, \parameters) \mid \parameters \in \paramspace}.
    \label{eq:model-space}
\end{equation}
into an RKHS. Assume that $\model(\location,\placeholder) \in \Hspace_\paramspace$, where $\Hspace_\paramspace$ is an RKHS of real-valued functions over $\paramspace$. For instance, if the model is simply linear with respect to the parameters, i.e., $\model(\location,\parameters) = \parameters \cdot \vec\feature(\location)$, for some fixed feature map $\feature:\domain\to\paramspace$, then $\model(\location,\cdot)$ lies in the RKHS of the linear kernel $\kernel_\paramspace(\parameters, \parameters') = \parameters\cdot\parameters'$, for $\parameters,\parameters'\in\paramspace$. This is the case for linear regression and neural networks in the infinite-width limit \citep{Jacot2018}. More generally, if $\model(\location, \parameters)$ is at least $\anyscalar$ times differentiable with respect to $\parameters\in\paramspace$, for $\anyscalar\in\N$ such that $\anyscalar \geq \nu + \frac{\paramdim}{2}$, with square-integrable derivatives over $\paramspace\subseteq\R^\paramdim$, then $\model(\location, \placeholder)$ can be shown to lie in the RKHS of a Mat\'ern kernel $\kernel_\paramspace$ with smoothness parameter $\nu > 0$ \citep{Rasmussen2006} under mild assumptions on the domain and the parameter space. Having a kernel $\kernel_\paramspace$ and the associated RKHS $\Hspace_\paramspace$ of functions over $\paramspace$, we can use the latter as a feature space to construct an RKHS over $\domain$ which contains $\mdlspace$. An application of a classic RKHS result \citet[see][Thm. 4.21]{Steinwart2008} then yields the following construction.


\begin{lemma}
    \label{thr:model-rkhs}
    Assume that $\model(\location;\cdot) \in \Hspace_\paramspace$, for all 
    $\location\in\domain$, where $\Hspace_\paramspace$ is a reproducing kernel Hilbert space associated with a positive-definite kernel $\kernel_\paramspace:\paramspace\times\paramspace\to\R$. It then follows that:
    \begin{equation*}
        \Hspace_\mdlspace = \{\anyfunction: \domain \to \R \mid \exists \weight\in\Hspace_\paramspace: \anyfunction(\location) = \inner{\weight, \model(\location, \cdot)}_{\Hspace_\paramspace}, \forall \location\in\domain \}
    \end{equation*}
    constitutes the only RKHS for which $\kernel_\mdlspace: (\location, \location') \mapsto \inner{\model(\location, \cdot), \model(\location', \cdot)}_{\Hspace_\paramspace}$ is the reproducing kernel and is equipped with the norm:
    \begin{equation*}
        \norm{\anyfunction}_{\mdlspace} := \inf\{\norm{\weight}_{\Hspace_\paramspace} : \weight \in \Hspace_\paramspace, \anyfunction(\location) = \inner{\weight, \model(\location, \cdot)}_{\Hspace_\paramspace}, \forall\location\in\domain \}.
        \label{eq:rkhs-norm}
    \end{equation*}
\end{lemma}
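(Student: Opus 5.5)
The plan is to recognise the statement as an instance of the feature-map characterisation of RKHSs \citep[Thm. 4.21]{Steinwart2008}, with feature space $\Hspace_\paramspace$ and feature map $\Phi:\domain\to\Hspace_\paramspace$, $\Phi(\location) := \model(\location,\cdot)$. By hypothesis $\Phi$ is well defined. The kernel $\kernel_\mdlspace(\location,\location') = \inner{\Phi(\location),\Phi(\location')}_{\Hspace_\paramspace}$ is symmetric and positive semidefinite, since $\sum_{i,j} a_i a_j \kernel_\mdlspace(\location_i,\location_j) = \norm{\sum_i a_i \Phi(\location_i)}_{\Hspace_\paramspace}^2 \geq 0$. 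Hence, by Moore--Aronszajn, it has a unique RKHS. The remaining work is to show that this RKHS coincides with $\Hspace_\mdlspace$ equipped with the stated infimum norm. I would prove this directly rather than only cite it, so that the argument is self-contained.

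First, consider the linear map $V:\Hspace_\paramspace\to\R^\domain$, $(V\weight)(\location) := \inner{\weight,\Phi(\location)}_{\Hspace_\paramspace}$, so that $\Hspace_\mdlspace = V(\Hspace_\paramspace)$. Its kernel $N := \ker V$ is the orthogonal complement of $\overline{\mathrm{span}}\,\Phi(\domain)$, and is therefore closed. Consequently $V$ restricted to $N^\perp$ is a bijection onto $\Hspace_\mdlspace$. For $\anyfunction = V\weight$, every preimage has the form $\weight + n$ with $n\in N$, so by Pythagoras the infimum defining $\norm{\anyfunction}_\mdlspace$ is attained at the orthogonal projection of $\weight$ onto $N^\perp$. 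Thus $\norm{\anyfunction}_\mdlspace = \norm{(V|_{N^\perp})^{-1}\anyfunction}_{\Hspace_\paramspace}$. This norm comes from the transported inner product $\inner{\anyfunction,g}_\mdlspace := \inner{(V|_{N^\perp})^{-1}\anyfunction,(V|_{N^\perp})^{-1}g}_{\Hspace_\paramspace}$, and $\Hspace_\mdlspace$ is complete because it is isometric to the closed subspace $N^\perp$.

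Next, I would verify the two RKHS properties. Since $\Phi(\location)\in N^\perp$ by construction, $\kernel_\mdlspace(\cdot,\location) = V\Phi(\location)\in\Hspace_\mdlspace$. For $\anyfunction = V\weight$ with $\weight\in N^\perp$, we get $\inner{\anyfunction,\kernel_\mdlspace(\cdot,\location)}_\mdlspace = \inner{\weight,\Phi(\location)}_{\Hspace_\paramspace} = \anyfunction(\location)$, which is the reproducing property. Uniqueness then follows from the standard fact that a reproducing kernel determines its RKHS. If $\Hspace'$ is another RKHS with kernel $\kernel_\mdlspace$, both spaces contain $\mathrm{span}\{\kernel_\mdlspace(\cdot,\location)\}$ with the same inner product there, and this span is dense in each: any element orthogonal to all kernel sections vanishes pointwise by reproduction. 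Moreover, norm convergence implies pointwise convergence, so completions of Cauchy sequences in the span yield the same functions with the same norms, and hence $\Hspace'=\Hspace_\mdlspace$ isometrically.

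The main obstacle is bookkeeping rather than anything conceptually new: the map $V$ need not be injective (distinct parameter-space weights can induce the same function), and this is exactly why the norm is defined as an infimum. The key step is therefore the observation that $\ker V$ is closed, which holds because it is an orthogonal complement. This turns the infimum into a projection and makes the inner product well defined. Everything else is a routine transfer of the Hilbert structure, together with the Moore--Aronszajn uniqueness argument.
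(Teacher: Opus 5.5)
Your proposal is correct and follows essentially the same route as the paper, which gives no separate proof and simply invokes the feature-map characterization of RKHSs \citep[Thm.~4.21]{Steinwart2008} with feature space $\Hspace_\paramspace$ and feature map $\location\mapsto\model(\location,\cdot)$. Your steps (the closed kernel of $V$, the orthogonal projection onto $(\ker V)^\perp$ that realizes the infimum norm, and the density-based uniqueness argument) are the standard proof of that cited theorem written out, so they make the lemma self-contained without changing the approach.
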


\paragraph{RKHS superspace.} Bounds expressed as a function of $\kernel_\mdlspace$ can be difficult to estimate in practice due to the intrinsic dependence of $\kernel_\mdlspace$ on the model architecture. However, it is often possible to find a better known kernel $\kernel$ which provides suitable upper bounds for quantities dependent on $\kernel_\mdlspace$. One practical instance is in the case of sufficiently smooth models with functions lie in Sobolev spaces, which coincide with the RKHS of Mat\'ern kernels. This is, for example, the case for neural networks with smooth activations \citep{Vakili2023} and generally models with bounded derivatives, as in the next result. Such RKHSs have well known spectral characterizations, which aid in deriving performance guarantees \citep{Srinivas2010}. Our proofs are located in \autoref{sec:proofs}.

\begin{lemma}
    \label{thr:model-matern}
    Let $\domain\subset\R^\locdim$ be compact, and let $\kernel:\domain\times\domain\to\R$ be a Mat\'ern kernel with smoothness parameter $\nu > 0$ with RKHS $\Hspace_\kernel$. Assume that, for an integer $\anyscalar \geq \nu + \frac{\locdim}{2}$,
    \begin{equation*}
        \sup_{\location\in\domain} \norm{D_\location^\alpha \model(\location, \placeholder)}_{\Hspace_\paramspace} < \infty, \qquad \forall\vec\alpha \in \N_0^\paramdim: \norm{\alpha}_1 \leq \anyscalar,
    \end{equation*}
    where $\norm{\vec\alpha}_1 = \sum_{i=1}^\locdim \alpha_i$, for $\vec\alpha = [\alpha_i]_{i=1}^\locdim \in \N_0^\locdim$, and $D_\location^\alpha \anyfunction := \frac{\partial^{\norm{\vec\alpha}_1} \anyfunction}{\partial\slocation_1^{\alpha_1}\dots \partial\slocation_\locdim^{\alpha_\locdim}}$ denotes the multi-index partial derivative.\footnote{For example, $D_\location^\alpha \anyfunction(\location) = \frac{\partial^3 \anyfunction(\location)}{\partial\slocation_1^2 \partial\slocation_3}$, for $\alpha = (2, 0, 1)$, and $\location := (\slocation_1, \slocation_2, \slocation_3) \in \R^3$.} Then, there exists a finite constant $\bound_\mdlspace > 0$, such that $\kernel_\mdlspace \preceq \bound_\mdlspace^2 \kernel$, and $\Hspace_\mdlspace \subset \Hspace_\kernel$.
\end{lemma}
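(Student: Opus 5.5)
We prove the two conclusions in order: first that every function in $\Hspace_\mdlspace$ lies in a Sobolev space with a uniform norm bound, and then that this containment, together with the RKHS structure from \autoref{thr:model-rkhs}, gives the kernel domination. Throughout, recall that the RKHS of a Mat\'ern kernel with smoothness $\nu$ on a compact domain $\domain\subset\R^\locdim$ (with, say, Lipschitz boundary) is norm-equivalent to the Sobolev space $W^{\nu+\locdim/2,2}(\domain)$. Since $\anyscalar\geq\nu+\frac{\locdim}{2}$ is an integer, we have the continuous embedding $W^{\anyscalar,2}(\domain)\hookrightarrow W^{\nu+\locdim/2,2}(\domain)\cong\Hspace_\kernel$. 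It therefore suffices to show that $\Hspace_\mdlspace$ embeds continuously into $W^{\anyscalar,2}(\domain)$.

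\textbf{Step 1: derivatives of $\anyfunction\in\Hspace_\mdlspace$.} By \autoref{thr:model-rkhs}, any $\anyfunction\in\Hspace_\mdlspace$ has the form $\anyfunction(\location)=\inner{\weight,\model(\location,\cdot)}_{\Hspace_\paramspace}$ for some $\weight\in\Hspace_\paramspace$. We interpret the hypothesis as saying that $\location\mapsto\model(\location,\cdot)$ is $\anyscalar$ times differentiable as a map $\domain\to\Hspace_\paramspace$, with derivatives $D^\alpha_\location\model(\location,\cdot)\in\Hspace_\paramspace$; weak differentiability in $L^2(\domain;\Hspace_\paramspace)$ would also suffice. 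Because the inner product is continuous, derivatives pass through it:
\begin{equation*}
D^\alpha\anyfunction(\location)=\inner{\weight,D^\alpha_\location\model(\location,\cdot)}_{\Hspace_\paramspace}.
\end{equation*}
This interchange is the delicate point. To justify it, we apply it to difference quotients: the difference quotients of $\model(\cdot,\cdot)$ converge in $\Hspace_\paramspace$ by the assumed differentiability, and this handles the classical case. In the weak case, we instead test against $\varphi\in C_c^\infty(\domain)$ and move the integral inside via a Bochner integral.

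\textbf{Step 2: the Sobolev bound.} Let $M_\alpha:=\sup_{\location}\norm{D^\alpha_\location\model(\location,\cdot)}_{\Hspace_\paramspace}$. Cauchy--Schwarz applied to the identity in Step 1 gives $|D^\alpha\anyfunction(\location)|\leq\norm{\weight}_{\Hspace_\paramspace}M_\alpha$. Integrating over the compact set $\domain$ yields
\begin{equation*}
\norm{\anyfunction}_{W^{\anyscalar,2}}^2\leq \norm{\weight}^2_{\Hspace_\paramspace}\,|\domain|\sum_{\norm{\alpha}_1\leq\anyscalar}M_\alpha^2 .
\end{equation*}
Taking the infimum over all representers $\weight$ gives $\norm{\anyfunction}_{W^{\anyscalar,2}}\leq C\norm{\anyfunction}_\mdlspace$. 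Combining this with the embedding constants above, we obtain $\norm{\anyfunction}_{\Hspace_\kernel}\leq\bound_\mdlspace\norm{\anyfunction}_{\mdlspace}$ for a finite $\bound_\mdlspace$. In particular, $\Hspace_\mdlspace\subset\Hspace_\kernel$.

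\textbf{Step 3: from norm domination to kernel domination.} We use the standard equivalence (Aronszajn): for RKHSs $\Hspace_1,\Hspace_2$ with kernels $\kernel_1,\kernel_2$, the inclusion $\Hspace_1\subset\Hspace_2$ with $\norm{\cdot}_{\Hspace_2}\leq c\norm{\cdot}_{\Hspace_1}$ is equivalent to $c^2\kernel_2-\kernel_1$ being positive semidefinite. To see the relevant direction directly, take a finite sum $\sum_i a_i\kernel_\mdlspace(\cdot,\location_i)$. Its squared $\Hspace_\mdlspace$-norm is $a^\top K_\mdlspace a$, and it equals the supremum of $\anyfunction$ evaluated against $a$, namely
\begin{equation*}
a^\top K_\mdlspace a=\sup\Bigl\{\bigl(\textstyle\sum_i a_i\anyfunction(\location_i)\bigr)^2:\norm{\anyfunction}_\mdlspace\leq1\Bigr\}.
\end{equation*}
By Step 2, the set $\{\norm{\anyfunction}_\mdlspace\leq1\}$ is contained in $\{\norm{\anyfunction}_{\Hspace_\kernel}\leq\bound_\mdlspace\}$. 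The supremum over the larger set equals $\bound_\mdlspace^2\,a^\top Ka$, so $a^\top K_\mdlspace a\leq\bound_\mdlspace^2\,a^\top Ka$, which is exactly $\kernel_\mdlspace\preceq\bound_\mdlspace^2\kernel$.

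We expect the main obstacle to be the analytic justification in Step 1, namely passing derivatives through the $\Hspace_\paramspace$ inner product. A secondary care point is the boundary regularity of $\domain$ that the Sobolev/Mat\'ern norm equivalence requires; if $\domain$ is only compact, we would enlarge it to a ball and use extension arguments.
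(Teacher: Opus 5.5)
Your proposal is correct and follows the paper's proof essentially step for step. Both arguments use the same chain: the Sobolev--Mat\'ern norm equivalence, differentiating under the $\mathcal{H}_\Theta$ inner product, Cauchy--Schwarz with the uniform derivative bounds, integration over the compact domain, an infimum over representers (the paper uses $\eta$-approximate representers and lets $\eta\downarrow 0$), the embedding $H^s\subseteq H^{\nu+d/2}$, and Aronszajn's inclusion theorem. The differences are minor: you verify the needed direction of Aronszajn's theorem directly via $a^\top K_{\mathcal{G}} a=\sup_{\|h\|_{\mathcal{G}}\le 1}(\sum_i a_i h(x_i))^2$ instead of citing it, and you are more explicit than the paper about justifying the derivative--inner-product interchange and the boundary regularity behind the Sobolev equivalence.
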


\paragraph{Regularization in the induced RKHS.}
The RKHS perspective allows us to formulate model-space regularization via RKHS norms. Namely, as $\model(\location, \placeholder)\in\Hspace_\paramspace$, by the reproducing property of $\kernel_\paramspace$, we have that $\model(\location, \parameters) = \inner{\kernel_\paramspace(\placeholder, \parameters), \model(\location, \placeholder)}_{\Hspace_\paramspace}$. Therefore, \autoref{thr:model-rkhs} implies that $\norm{\model(\pch, \parameters)}_\mdlspace^2 = \kernel_\paramspace(\parameters, \parameters)$. This fact allows us to define regularized norms over the RKHS $\Hspace_\mdlspace$ which are computable as a function of the model parameters. Therefore, regularized losses of the form:
\begin{equation}
    \Loss_\iteridx(\model) = \regfactor\norm{\model}_\mdlspace^2 + \sum_{i=1}^\iteridx \loss(\model(\location_i), \observation_i), \qquad \iteridx\geq 1,
\end{equation}
are strongly convex in $\Hspace_\mdlspace$, given any $\regfactor>0$, under our assumptions on the pointwise loss (see \autoref{a:loss}), as $\nabla_\anyfunction^2 \loss(\anyfunction(\location), \observation) = \ddot\loss_\observation(\anyfunction(\location)) \feature(\location) \otimes \feature(\location) \succeq 0$.\footnote{We use $\anyoperator \succeq 0$ to denote that $\anyoperator$ is positive semidefinite, i.e., $\inner{\anyfunction, \anyoperator\anyfunction}_\Hspace \geq 0$, for all $\anyfunction\in\Hspace$, given a bounded linear operator $\anyoperator$ on a Hilbert space $\Hspace$.} Strong convexity allows for controlling approximation errors between the loss minimizer and other elements, which becomes a key ingredient for generalization error bounds \citep{Karimi2016, Oliveira2026generative}.


\section{On the cumulative regret of likelihood-free Bayesian optimization}
In this section, we aim to analyze the optimization performance of LFBO algorithms \citep{Song2022a}. In their general setting, an algorithm learns an approximation to the acquisition function $\model_\iteridx \approx \af_\iteridx$ by minimizing a loss function over utility data $\dataset_\iteridx^\utility := \braces{(\location_i, \mlabel_{\iteridx,i})}_{i=1}^\iteridx$, where $\mlabel_{\iteridx,i} = \utility_\iteridx(\observation_i)$, for $i\leq\iteridx$ and $\iteridx\in\N$. The next data point is selected by optimizing over the learned model:
\begin{equation}
    \location_{\iteridx} \in \argmax_{\location\in\domain} \model_{\iteridx-1}(\location), \qquad \iteridx \geq 1,
\end{equation}
where $\model_\iteridx := \model(\placeholder, \parameters_\iteridx)$, and $\parameters_\iteridx$ represents the empirical loss minimizer \eqref{eq:loss}. An observation is collected $\observation_\iteridx = \objective(\location_\iteridx) + \obsnoise_\iteridx$, where $\obsnoise_\iteridx$ is observation noise, and the process repeats for up to $\niter \geq 1$ iterations.


The goal of the algorithm is to estimate the global optimum of $\objective$ \eqref{eq:objective}. Optimization performance can be characterized by analyzing the algorithm's regret. The instant regret of an algorithm quantifies how much is lost in performance for selecting $\location_\iteridx\in\domain$ when the global optimum is at $\location^\star\in\domain$:
\begin{equation}
    \regret_\iteridx := \objective(\location^\star) - \objective(\location_\iteridx), \qquad \iteridx\in\N.
\end{equation}
The cumulative regret $\Regret_\niter := \sum_{\iteridx\leq\niter}\regret_\iteridx$ is typically the target performance metric, as it captures how often the algorithm makes sub-optimal choices. We also have an upper bound on the simple regret with it as $\min_{\iteridx\leq\niter} \regret_\iteridx \leq \frac{\Regret_\niter}{\niter}$. Therefore, if $\Regret_\niter$ grows at most sub-linearly, $\regret_\iteridx$ eventually vanishes.

\subsection{Randomized policies}
A deterministic acquisition learner does not, by itself, provide exploration. Once the learned model is optimized greedily, any uncertainty not encoded in the trained acquisition target or in the training procedure is lost. For example, deterministic BORE-type classifiers can fail to explore beyond the region of currently positive training labels \citep{Oliveira2022}. Hence, introducing a certain amount of randomness in the training of the model is beneficial. We, therefore, consider randomized models:
\begin{equation}
    \parameters_\iteridx \in \argmin_{\parameters\in\paramspace} \frac{\regfactor_\iteridx}{2}\norm{\model_\parameters - \model_{\iteridx,0}}_\mdlspace^2 + \sum_{i=1}^\iteridx \loss(\model_\parameters(\location_i), \observation_i), \qquad \parameters_{\iteridx, 0} \sim \paramprior, \quad \iteridx\geq 1,
    \label{eq:random-loss}
\end{equation}
for a non-decreasing, real-valued, positive sequence $\seqbracks{\regfactor_\iteridx}_{\iteridx\in\N}$. We then set $\model_\iteridx := \model(\placeholder, \parameters_\iteridx)$, for $\iteridx\geq 1$, with $\model_0 := \model_{0,0}$. Thus, we are penalizing large deviations from the random initialization, according to the pseudometric defined by the RKHS norm in $\paramspace$:
\begin{equation}
    \paramdist(\parameters,\parameters') := \norm{\model_\parameters - \model_{\parameters'}}_\mdlspace = \norm{\kernel_\paramspace(\pch, \parameters) - \kernel_\paramspace(\pch, \parameters')}_{\Hspace_\paramspace} = \sqrt{\kernel_\paramspace(\parameters, \parameters) - 2\kernel_\paramspace(\parameters, \parameters') + \kernel_\paramspace(\parameters', \parameters')},
\end{equation}
The next data point $\location_{\iteridx+1}$ is located at the maximizer of the randomized model, for $\iteridx\in\range{\niter}$. When the amount of data is small, the strength of the regularization then prevents the model from overfitting, staying close to the random initialization $\model_{\iteridx,0}(\location)$ at points $\location$ away from the data. As the algorithm progresses, this promotes exploration of the design space.

We highlight that most modern machine learning frameworks implement their main model building blocks with randomly initialized parameters, e.g., PyTorch \citep{Paszke2019pytorch}, besides providing options for $\ell_2$-based regularization with their optimizers, such as weight decay. In addition, as no model is trained for an infinite amount of time, early stopping provides an implicit level of regularization that prevents models from going too far from their initialization \citep{Fleming1990, Jacot2018}. However, we make explicit the dependence on the initialization here for our analysis, which is essential to ensure sufficient exploration. This construction also agrees with recent results on explicit regularization for neural networks under noisy data \citep{Calvo2025}.

\begin{assumption}[Prior]
    \label{a:prior}
    There exist constants $\anyconstant_\star, \pradius_0, \deff \in (0,\infty)$ such that, for any $\parameters_0 \sim \paramprior$,
    \begin{equation*}
        \prob{\paramdist(\parameters_\star, \parameters_0) \leq \pradius} \geq \anyconstant_\star\pradius^\deff, \quad \forall \pradius \in (0, \pradius_0),
    \end{equation*}
    and there exists a non-decreasing function $\pfunction: (0,\infty) \to (0,\infty)$ such that, for all $\anyscalar > 0$,
    \begin{equation*}
        \prob{\paramdist(\parameters_\star, \parameters_0) > \pfunction(\anyscalar)} \leq e^{-\anyscalar}.
    \end{equation*}
\end{assumption}

In practice, the lower-tail condition is a local support requirement in the geometry induced by the model class, while the upper-tail condition controls unusually poor initializations. For example, for $\paramspace\subseteq\R^\paramdim$, the map $\parameters\mapsto \model(\placeholder,\parameters)$ is locally Lipschitz from the Euclidean distance to $\paramdist$, and $\paramprior$ has a density bounded below in a neighborhood of $\parameters^\star$, then the small-ball condition holds with $\deff\le \paramdim$. If, in addition, $\paramdist(\parameters,\parameters^\star)$ has at most linear growth in $\|\parameters-\parameters^\star\|_2$ and $\paramprior$ has sub-Gaussian Euclidean tails, then the upper-tail condition holds with $\pfunction(\anyscalar)$ as $O(1+\sqrt\anyscalar)$. These assumptions, therefore, include standard finite-dimensional Gaussian initializations, up to the constants and effective dimension induced by the RKHS pseudometric.

\subsection{Generalization error bounds}
\label{sec:error-bounds}
For our theoretical analysis, we need generalization error bounds for the model's predictions. The main advantage of the RKHS-based framework is that it allows us to quantify error bounds with respect to the RKHS minimizer of the regularized loss:
\begin{equation}
    \hat\model_\iteridx \in \argmin_{\model \in \Hspace_\mdlspace} \Loss_\iteridx(\model), \qquad \iteridx\geq 1,
\end{equation}
which provides several advantages when compared to trying to bound approximation errors directly with respect to $\model_\iteridx := \model(\placeholder,\parameters_\iteridx)$. For instance, $\model_\iteridx$ is basically an infinite-dimensional linear model in $\Hspace_\mdlspace$. The pointwise loss function's gradient with respect to $\anyfunction \in \Hspace_\mdlspace$ is simply $\nabla_\anyfunction \loss(\anyfunction(\location), \observation) = \dot\loss_\observation(\anyfunction(\location)) \nabla_\anyfunction \anyfunction(\location) = \dot\loss_\observation(\anyfunction(\location)) \feature(\location)$, where $\feature(\location) := \kernel_\mdlspace(\placeholder, \location)$ is the canonical feature map associated with $\kernel_\mdlspace$, for $\location\in\domain$ and $\observation\in\obsspace$. Furthermore, by \autoref{a:loss}, $\loss(\anyfunction(\location), \observation)$ is also strictly convex with respect to $\anyfunction \in \Hspace_\mdlspace$, as $\nabla_\anyfunction^2 \loss(\anyfunction(\location), \observation) = \ddot\loss_\observation(\anyfunction(\location)) \feature(\location) \otimes \feature(\location) \succeq 0$.\footnote{We use $\anyoperator \succeq 0$ to denote that $\anyoperator$ is positive semidefinite, i.e., $\inner{\anyfunction, \anyoperator\anyfunction}_\Hspace \geq 0$, for all $\anyfunction\in\Hspace$, given a bounded linear operator $\anyoperator$ on a Hilbert space $\Hspace$.} Considering these observations, we are able to derive the following result.

\begin{theorem}
    \label{thr:error-bound}
    Let $\braces{\filtration_\iteridx}_{\iteridx=0}^\infty$ be a filtration such that $\model_{\iteridx,0}$ is $\filtration_\iteridx$-measurable for all $\iteridx\geq 0$, and $\location_\iteridx$ and $\observation_\iteridx$ are $\filtration_\iteridx$-measurable for all $\iteridx\geq 1$. Let $\parameters_\iteridx$ denote the minimizer of the loss in \autoref{eq:random-loss}. Assume that there exists $\parameters_\star \in \paramspace$ such that $\dot\loss_{\observation_\iteridx}(\model(\location_\iteridx, \parameters_\star))$ is conditionally $\sigma_\loss^2$-sub-Gaussian given $\filtration_{\iteridx-1}$, for all $\iteridx\in\N$, and let $\loss$ satisfy \autoref{a:loss}. Let $\regfactor_0>0$ be such that $\regfactor_0 \leq \inf_{\iteridx\in\N}\regfactor_\iteridx$.
    Then, for any given $\delta \in (0,1]$,
    \begin{equation*}
        \prob{\forall \iteridx \geq 0, \quad \abs{\model(\location, \parameters_\iteridx) - \model(\location, \parameters_\star)} \leq 2\widehat\beta_\iteridx(\delta) \norm{\feature(\location)}_{\Hessian_\iteridx^{-1}}, \quad \forall \location \in \domain} \geq 1 -\delta,
    \end{equation*}
    where
    $\widehat\beta_\iteridx(\delta) = \sqrt{\regfactor_\iteridx}\norm{\model_\star - \model_{\iteridx,0}}_\mdlspace + \sigma_\loss\sqrt{2\lossfactor^{-1}\log(\det(\eye + \lossfactor\regfactor_0^{-1}\mat\kernel_\iteridx^\mdlspace)^\half/\delta)}$, 
    $\mat\kernel_\iteridx^\mdlspace := [\kernel_\mdlspace(\location_i, \location_j)]_{i,j=1}^\iteridx \in \R^{\iteridx\times\iteridx}$, 
    and $\Hessian_\iteridx := \regfactor_\iteridx \idop + \lossfactor \sum_{i=1}^\iteridx \feature(\location_i)\otimes\feature(\location_i)$.
\end{theorem}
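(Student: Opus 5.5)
We work entirely in the induced RKHS $\Hspace_\mdlspace$ from \autoref{thr:model-rkhs}. There every model $\model_\parameters$ is an element with $\model_\parameters(\location)=\inner{\model_\parameters,\feature(\location)}_\mdlspace$, so the objective in \eqref{eq:random-loss} is the restriction to $\mdlspace$ of the $\regfactor_\iteridx$-strongly convex functional $\Loss_\iteridx(\anyfunction)=\frac{\regfactor_\iteridx}{2}\norm{\anyfunction-\model_{\iteridx,0}}_\mdlspace^2+\sum_{i\le\iteridx}\loss(\anyfunction(\location_i),\observation_i)$.

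We assume (as the statement implicitly does) that the parametric minimizer $\model_\iteridx=\model_{\parameters_\iteridx}$ satisfies the first-order condition $\nabla\Loss_\iteridx(\model_\iteridx)=0$ in $\Hspace_\mdlspace$. This holds, for instance, if $\mdlspace$ contains the RKHS minimizer $\hat\model_\iteridx$, or if we identify $\model_\iteridx$ with it. If only parametric stationarity is available, we instead compare $\model_\star$ against $\hat\model_\iteridx$ and treat the gap as a separate step, which we expect to be the main obstacle. The factor $2$ in the bound suggests the author splits the error into two comparable pieces, perhaps $\model_\iteridx$ vs.\ $\hat\model_\iteridx$ and $\hat\model_\iteridx$ vs.\ $\model_\star$.

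\textbf{Step 1 (Hessian bound via strong convexity).} Apply the mean value theorem to the gradient along the segment from $\model_\star$ to $\model_\iteridx$:
\begin{equation*}
\nabla\Loss_\iteridx(\model_\iteridx)-\nabla\Loss_\iteridx(\model_\star)=\Big(\regfactor_\iteridx\idop+\sum_{i}\bar{\ddot\loss}_i\,\feature(\location_i)\otimes\feature(\location_i)\Big)(\model_\iteridx-\model_\star),
\end{equation*}
where each averaged curvature satisfies $\bar{\ddot\loss}_i\ge\lossfactor$ by \autoref{a:loss}. Call the operator in parentheses $\mat G_\iteridx$, so that $\mat G_\iteridx\succeq\Hessian_\iteridx$. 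Since $\nabla\Loss_\iteridx(\model_\iteridx)=0$, we get $\model_\iteridx-\model_\star=-\mat G_\iteridx^{-1}\nabla\Loss_\iteridx(\model_\star)$. By Cauchy--Schwarz,
\begin{equation*}
\abs{\model_\iteridx(\location)-\model_\star(\location)}\le\norm{\feature(\location)}_{\mat G_\iteridx^{-1}}\norm{\nabla\Loss_\iteridx(\model_\star)}_{\mat G_\iteridx^{-1}}\le\norm{\feature(\location)}_{\Hessian_\iteridx^{-1}}\norm{\nabla\Loss_\iteridx(\model_\star)}_{\Hessian_\iteridx^{-1}}.
\end{equation*}

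\textbf{Step 2 (splitting the gradient).} We have $\nabla\Loss_\iteridx(\model_\star)=\regfactor_\iteridx(\model_\star-\model_{\iteridx,0})+S_\iteridx$ with $S_\iteridx:=\sum_i\dot\loss_{\observation_i}(\model_\star(\location_i))\feature(\location_i)$. The regularization term is bounded by $\regfactor_\iteridx\norm{\model_\star-\model_{\iteridx,0}}_{\Hessian_\iteridx^{-1}}\le\sqrt{\regfactor_\iteridx}\norm{\model_\star-\model_{\iteridx,0}}_\mdlspace$, because $\Hessian_\iteridx\succeq\regfactor_\iteridx\idop$.

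\textbf{Step 3 (self-normalized noise term).} Since $\regfactor_\iteridx\ge\regfactor_0$, we have $\Hessian_\iteridx\succeq\lossfactor(\bar V_\iteridx)$ with $\bar V_\iteridx:=\regfactor_0\lossfactor^{-1}\idop+\sum_i\feature(\location_i)\otimes\feature(\location_i)$. Hence $\norm{S_\iteridx}_{\Hessian_\iteridx^{-1}}\le\lossfactor^{-1/2}\norm{S_\iteridx}_{\bar V_\iteridx^{-1}}$. The summands $\dot\loss_{\observation_i}(\model_\star(\location_i))$ are conditionally $\sigma_\loss$-sub-Gaussian given $\filtration_{i-1}$, and $\location_i$ is $\filtration_{i}$-measurable. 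We need $\location_i$ to be predictable here, which holds because $\location_i$ is chosen from $\model_{i-1}$ and hence is $\filtration_{i-1}$-measurable. The kernelized self-normalized bound of Abbasi-Yadkori (the RKHS version, via the method of mixtures) therefore gives, uniformly in $\iteridx$ with probability $1-\delta$,
\begin{equation*}
\norm{S_\iteridx}_{\bar V_\iteridx^{-1}}^2\le2\sigma_\loss^2\log\big(\det(\eye+\lossfactor\regfactor_0^{-1}\mat\kernel^\mdlspace_\iteridx)^{1/2}/\delta\big).
\end{equation*}
The regularization parameter is fixed at $\regfactor_0/\lossfactor$ here precisely so that the mixture argument is valid with a time-independent regularizer. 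This gives the second term of $\widehat\beta_\iteridx$.

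Combining the three steps yields a bound of $\widehat\beta_\iteridx\norm{\feature(\location)}_{\Hessian_\iteridx^{-1}}$. The factor $2$ in the statement leaves slack, which we would use to absorb the gap between the parametric minimizer and the RKHS minimizer. Concretely, strong convexity gives $\norm{\model_\iteridx-\hat\model_\iteridx}_{\Hessian_\iteridx}$ controlled by the same $\widehat\beta_\iteridx$ when $\Loss_\iteridx(\model_\iteridx)\le\Loss_\iteridx(\model_\star)$, since $\model_\star$ itself lies in $\mdlspace$. We expect this step to be the delicate part of the proof.
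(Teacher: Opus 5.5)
Your main line (Steps 1--3) does not prove the statement, though your closing paragraph names the right fix. In Step 1 you assume $\nabla L_t(g_t)=0$ in $\mathcal{H}_{\mathcal{G}}$, but the theorem does not grant this, implicitly or otherwise. The parameter $\theta_t$ minimizes over $\Theta$, that is, over the nonlinear subset $\mathcal{G}\subset\mathcal{H}_{\mathcal{G}}$. A minimizer over such a set satisfies no first-order condition in the full space; at best, if $\theta\mapsto g_\theta$ is differentiable, $\nabla L_t(g_t)$ is orthogonal to the directions in which $g_\theta$ can move. So the factor $2$ is not slack. It is the cost of the two-stage comparison through the RKHS minimizer $\hat g_t$, and that comparison is the paper's actual proof. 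Run your Step 1 for $\hat g_t$ instead of $g_t$; $\hat g_t$ exists and satisfies $\nabla L_t(\hat g_t)=0$ because $L_t$ is strongly convex and continuous on a Hilbert space. Your mean-value identity with $G_t\succeq H_t$ then gives $|\hat g_t(x)-g_\star(x)|\le\|\nabla L_t(g_\star)\|_{H_t^{-1}}\|\phi(x)\|_{H_t^{-1}}$. This is equivalent to the paper's $\|\hat g_t-g_\star\|_{H_t}\le\|\nabla L_t(g_\star)\|_{H_t^{-1}}$. Your Steps 2--3 match the paper, including the fixed ridge $\lambda_0$ in the self-normalized bound, and your predictability remark is more careful than the paper's.

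The step you call delicate needs one inequality you did not state. Since $\ddot\ell_y\ge\alpha_\ell$ everywhere, $\nabla^2 L_t\succeq H_t$ globally, so for every $h$:
$L_t(h)\ge L_t(g_\star)+\langle\nabla L_t(g_\star),h-g_\star\rangle+\tfrac12\|h-g_\star\|_{H_t}^2$.
Minimizing the right-hand side over $h$ gives $L_t(g_\star)-L_t(\hat g_t)\le\tfrac12\|\nabla L_t(g_\star)\|_{H_t^{-1}}^2$. The same lower bound expanded at $\hat g_t$, where the gradient vanishes, gives $\tfrac12\|g_t-\hat g_t\|_{H_t}^2\le L_t(g_t)-L_t(\hat g_t)$. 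Parametric optimality together with $g_\star\in\mathcal{G}$ gives $L_t(g_t)\le L_t(g_\star)$. Chaining these three, on the same concentration event, $\|g_t-\hat g_t\|_{H_t}\le\|\nabla L_t(g_\star)\|_{H_t^{-1}}\le\widehat\beta_t(\delta)$. Cauchy--Schwarz and the triangle inequality then yield $2\widehat\beta_t(\delta)\|\phi(x)\|_{H_t^{-1}}$. Only the lower curvature bound from the loss assumption is used, so no upper smoothness of $\ell$ is needed.
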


The proof is a pointwise-specialized version of the RKHS loss-geometry and self-normalized martingale argument of \citet{Oliveira2026generative}, with the regularization term centered at the random initialization.
The main implication is that we can control the approximation error with respect to a true underlying model $\model_\star := \model(\placeholder,\parameters_\star)$ controlling the data-generation process. All our proofs are found in \autoref{sec:proofs}.

The error bound in \autoref{thr:error-bound} demands a more thorough characterization of the model-dependent kernel $\kernel_\mdlspace$, which appears in the main quantities controlling the error. However, the next result allows us to replace $\kernel_\mdlspace$ in the error bound with a better known model-agnostic kernel.

\begin{corollary}
    \label{thr:error-gp}
    Assume there exists a positive-semidefinite kernel $\kernel: \domain\times\domain\to\R$ such that $\kernel - \kernel_\mdlspace$ defines a positive-semidefinite kernel over $\domain$, i.e., $\kernel \succeq \kernel_\mdlspace$. Then, under the settings in \autoref{thr:error-bound}, the following holds with probability at least $1-\delta$:
    \begin{equation*}
        \forall \iteridx \geq 0, \quad \abs{\model(\location, \parameters_\iteridx) - \model(\location, \parameters_\star)} \leq 2\beta_\iteridx(\delta) \sigma_\iteridx(\location), \quad \forall \location \in \domain,
    \end{equation*}
    where 
    $\beta_\iteridx(\delta) 
    = \sqrt{\frac{\regfactor_\iteridx}{\regfactor_0}}\norm{\model_\star - \model_{\iteridx,0}}_\mdlspace 
    + \sqrt{\frac{2\sigma_\loss^2}{\lossfactor\regfactor_0}\log(\det(\eye + \lossfactor\regfactor_0^{-1}\mat\kernel_\iteridx)^\half/\delta)}$, for a given $\regfactor_0 \leq \inf_{\iteridx\in\N} \regfactor_\iteridx$,
    and $\sigma_\iteridx^2$ is the posterior predictive variance of a zero-mean GP with covariance function $\kernel$. Namely,
    \begin{equation*}
        \sigma_\iteridx^2(\location) := \kernel(\location,\location) - \vec\kernel_\iteridx(\location)^\transpose(\mat\kernel_\iteridx + {\regfactor_0}{\lossfactor}^{-1}\eye)^{-1}\vec\kernel_\iteridx(\location),
    \end{equation*}
    where $\vec\kernel_\iteridx(\location) := [\kernel(\location,\location_i)]_{i=1}^\iteridx \in \R^\iteridx$, and $\mat\kernel_\iteridx := [\kernel(\location_i, \location_j)]_{i,j=1}^\iteridx \in \R^{\iteridx\times\iteridx}$.
\end{corollary}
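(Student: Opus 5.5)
We derive the corollary directly from \autoref{thr:error-bound}, by showing that each $\kernel_\mdlspace$-dependent quantity in that bound is dominated by its analogue built from $\kernel$. The two quantities are the predictive width $\norm{\feature(\location)}_{\Hessian_\iteridx^{-1}}$ and the log-determinant inside $\widehat\beta_\iteridx(\delta)$. Throughout we work on the event of probability at least $1-\delta$ supplied by \autoref{thr:error-bound}. Since every replacement below is deterministic given the data, no new probabilistic argument is needed.

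\textbf{Step 1: the width.} Because $\regfactor_\iteridx \geq \regfactor_0$, we have $\Hessian_\iteridx \succeq \Hessian_{\iteridx}^{0} := \regfactor_0\idop + \lossfactor\sum_{i\le\iteridx}\feature(\location_i)\otimes\feature(\location_i)$. Hence $\Hessian_\iteridx^{-1}\preceq (\Hessian^0_\iteridx)^{-1}$ by operator monotonicity of inversion.

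Next we use the standard kernel-ridge identity. Write $\Phi_\iteridx$ for the sampling operator. The Woodbury identity then gives
\begin{equation*}
\norm{\feature(\location)}^2_{(\Hessian^0_\iteridx)^{-1}} = \regfactor_0^{-1}\bigl(\kernel_\mdlspace(\location,\location) - \vec\kernel^\mdlspace_\iteridx(\location)^\transpose(\mat\kernel^\mdlspace_\iteridx + \regfactor_0\lossfactor^{-1}\eye)^{-1}\vec\kernel^\mdlspace_\iteridx(\location)\bigr) = \regfactor_0^{-1}(\sigma^\mdlspace_\iteridx)^2(\location),
\end{equation*}
which is the GP posterior variance under prior $\kernel_\mdlspace$ with noise $\regfactor_0/\lossfactor$.

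The key claim is that the posterior variance is monotone in the prior kernel: $\kernel\succeq\kernel_\mdlspace$ implies $\sigma^\mdlspace_\iteridx\le\sigma_\iteridx$. To prove it, write $\kernel = \kernel_\mdlspace + \kernel'$ with $\kernel'$ psd, and view $g = g_1 + g_2$ as a sum of independent GPs. The posterior variance of $g(\location)$ given noisy observations of $g$ is at least the posterior variance of $g(\location)$ given those observations together with $g_2$ itself. The latter equals $\mathrm{Var}[g_1(\location)\mid g_1(\location_i)+\text{noise}]$, because conditioning on $g_2$ removes it entirely. Since conditioning on more information reduces Gaussian variance, this yields $\sigma_\iteridx^2 \ge (\sigma^\mdlspace_\iteridx)^2$. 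Alternatively, the same inequality follows from the variational characterization $\sigma^2(\location)=\min_{\vec a}\mathbb{E}[(g(\location)-\vec a^\transpose \vec{\observation})^2]$, which is linear in the kernel for each fixed $\vec a$, so taking the minimum preserves the psd order. Combining the pieces gives $\norm{\feature(\location)}_{\Hessian_\iteridx^{-1}}\le \regfactor_0^{-1/2}\sigma_\iteridx(\location)$.

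\textbf{Step 2: the log-determinant.} Since $\kernel\succeq\kernel_\mdlspace$, we have $\mat\kernel_\iteridx\succeq\mat\kernel^\mdlspace_\iteridx$ as matrices, because each Gram matrix of the psd kernel $\kernel-\kernel_\mdlspace$ is psd. The map $A\mapsto\log\det(\eye+cA)$ is monotone on psd matrices, so $\log\det(\eye+\lossfactor\regfactor_0^{-1}\mat\kernel^\mdlspace_\iteridx)\le\log\det(\eye+\lossfactor\regfactor_0^{-1}\mat\kernel_\iteridx)$.

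\textbf{Step 3: combine.} Multiplying $\widehat\beta_\iteridx(\delta)$ by the factor $\regfactor_0^{-1/2}$ from Step 1 and applying Step 2 gives
\begin{equation*}
\widehat\beta_\iteridx(\delta)\,\regfactor_0^{-1/2}\sigma_\iteridx(\location) \le \beta_\iteridx(\delta)\sigma_\iteridx(\location).
\end{equation*}
The first term becomes $\sqrt{\regfactor_\iteridx/\regfactor_0}\norm{\model_\star-\model_{\iteridx,0}}_\mdlspace$, and the second becomes $\sqrt{2\sigma_\loss^2/(\lossfactor\regfactor_0)\cdot\log(\cdots)}$, which matches the stated $\beta_\iteridx(\delta)$. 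The bound holds simultaneously for all $\iteridx$ and $\location$ on the same event.

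The main obstacle is Step 1's comparison of posterior variances across kernels, together with making the Woodbury identity rigorous in the possibly infinite-dimensional $\Hspace_\mdlspace$. For the latter, $\sum_i\feature(\location_i)\otimes\feature(\location_i)=\Phi_\iteridx^*\Phi_\iteridx$ is finite rank, so the push-through identity $(\regfactor_0\idop+\lossfactor\Phi^*\Phi)^{-1}\Phi^* = \Phi^*(\regfactor_0\eye+\lossfactor\Phi\Phi^*)^{-1}$ suffices.
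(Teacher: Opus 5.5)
Your proof is correct and follows essentially the same route as the paper. It lower-bounds $H_t \succeq \lambda_0 I + \alpha_\ell\sum_{i\le t}\phi(x_i)\otimes\phi(x_i)$, applies Woodbury to obtain $\lambda_0^{-1}$ times the $k_{\mathcal{G}}$-posterior variance, compares posterior variances and log-determinants under $k \succeq k_{\mathcal{G}}$, and absorbs the factor $\lambda_0^{-1/2}$ into $\widehat\beta_t(\delta)$. The only difference is how you justify the variance comparison: you use an independent-GP decomposition (or the min-over-linear-predictors characterization), where the paper uses monotonicity of the Schur complement of the joint covariance matrix, and these are the same fact.
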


Although the learned model need not be a GP or kernel machine, \autoref{thr:error-gp} shows that its pointwise approximation error can be bounded by standard GP quantities.  This reduction allows standard GP information-gain bounds for $\kernel$ to be used even when the model being trained is a nonlinear parametric model rather than a Gaussian process. The following result shows that conditions in \autoref{thr:error-gp} can be satisfied for sufficiently smooth models, such as neural networks with smooth activations.

\begin{remark}
    \label{rem:beta-log-t}
    For a finite domain $\card{\domain} < \infty$, the kernel matrix $\mat\kernel_\iteridx$ in \autoref{thr:error-gp} can have at most $\nfeatures = \card{\domain}$ nonzero eigenvalues, $\keigval_{\iteridx,1} \geq \dots \geq \keigval_{\iteridx,\nfeatures}$, and each is bounded by $\keigval_{\iteridx,1} \leq \tr (\mat\kernel_\iteridx) \in \bigo(\iteridx)$, for $\iteridx\in\N$. Thus, $\log\det(\eye + \lossfactor\regfactor_0^{-1}\mat\kernel_\iteridx) \leq \sum_{i=1}^\nfeatures \log(1 + \lossfactor\regfactor_0^{-1}\keigval_{\iteridx,i}) \leq \nfeatures \log(1 + \lossfactor\regfactor_0^{-1}\keigval_{\iteridx,1})$. Therefore, we have that $\beta_\iteridx \in \bigo_p\paren*{\sqrt{\regfactor_\iteridx} + \sqrt{\card{\domain}\log \iteridx}}$ in the finite-domain case.\footnote{We use $\bigo_p$ to denote rates of asymptotic convergence in probability.}
\end{remark}



For improvement-based LFBO, recomputing thresholds can relabel past data and break the predictability assumptions used in the concentration argument. We expect this effect to be benign for the functioning of the algorithm, but a full regret analysis would require a separate stability argument. We therefore leave the complete treatment of time-varying LFBO utilities to future work. In the formal regret result below, we focus on the regression case $\mlabel_\iteridx:=\observation_\iteridx$, so that $\model_\iteridx\approx\objective$ and the randomized greedy policy recovers approximate Thompson sampling \citep{Zhang2021nts, Paria2022}. This setting isolates the main contribution of the analysis: controlling regret for finite-dimensional nonlinear randomized models under adaptive sampling. In addition, we keep our assumptions about the loss function generic (\autoref{a:loss}), not forcing it to be any specific regression loss. 

\subsection{Approximate Thompson sampling}
\label{sec:ts-analysis}


For approximate Thompson sampling, the model $\model_\iteridx$ approximates the objective $\objective$ based on information available at time $\iteridx\geq 1$. The algorithm's regret at time $\iteridx\geq 1$ can be decomposed as:
\begin{equation}
    \begin{split}
        \regret_\iteridx &= \objective(\location^\star) - \model_\iteridx(\location^\star) + \model_\iteridx(\location^\star) - \model_\iteridx(\location_\iteridx) + \model_\iteridx(\location_\iteridx) - \objective(\location_\iteridx)\\
        &\leq \objective(\location^\star) - \model_\iteridx(\location^\star) + \model_\iteridx(\location_\iteridx) - \objective(\location_\iteridx),
    \end{split}
\end{equation}
since $\model_\iteridx(\location^\star) - \model_\iteridx(\location_\iteridx) \leq 0$, as $\location_\iteridx$ maximizes $\model_\iteridx$. In exact Thompson sampling $\objective(\location^\star) - \model_\iteridx(\location^\star)$ is zero in expectation, as $\model_\iteridx$ and $\objective$ are equal in distribution when conditioned on the data. In the inexact case, however, that term does not vanish, and we can aim to establish a concentration bound by controlling the approximation error at $\location^\star$. We can now apply \autoref{thr:error-gp} to upper bound the pointwise approximation error of the nonlinear parametric model at $\location^\star$ and $\location_\iteridx$ using GP-based quantities as:
\begin{equation}
    \forall\iteridx\geq 1, \qquad \regret_\iteridx \leq \beta_{\iteridx-1}(\delta) (\sigma_{\iteridx-1}(\location^\star) + \sigma_{\iteridx-1}(\location_\iteridx)),
\end{equation}
which holds with probability at least $1-\delta$ (uniformly over time), for a given $\delta \in (0,1]$. The second term on the right-hand side is the term usually found in regret bounds for exact Thompson sampling \citep{Takeno2020} and GP-UCB \citep{Srinivas2010}, and it can be bound as a function of a GP's maximum information gain \citep{Vakili2021}. Therefore, our analysis focuses on the first term $\beta_\iteridx\sigma_\iteridx(\location^\star)$, which can be guaranteed to vanish if the true optimum $\location^\star$ is visited sufficiently often.

\begin{lemma}
    \label{thr:variance-decay}
    Assume a finite domain $\abs{\domain} < \infty$ and that $\objective = \model(\placeholder,\parameters_\star)$ with a unique global optimum $\braces{\location^\star} = \argmax_{\location\in\domain} \objective(\location)$.
    Under the assumptions in \autoref{thr:error-gp}, let $\regfactor_\iteridx$ be set such that%
    \footnote{Here, $\omega(\anyscalar_\iteridx)$ denotes a strict asymptotic lower bound, i.e., $\regfactor_\iteridx \in \omega(\log\iteridx) \iff \lim_{\iteridx\to\infty} \frac{\log\iteridx}{\regfactor_\iteridx} = 0$.}
    $\regfactor_\iteridx \in \omega(\log\iteridx)$ and $\sum_{\iteridx=1}^\infty \regfactor_\iteridx^{-\deff/2} = \infty$,
    and let $\paramprior$ satisfy the conditions in \autoref{a:prior}. Then,
    \begin{equation*}
        \sigma_\iteridx^2(\location^\star) \in \bigo\paren*{\paren*{\sum_{i=1}^\iteridx\regfactor_i^{-\deff/2}}^{-1}},
    \end{equation*}
    with probability at least $1-\delta$. In particular, if $\regfactor_\iteridx \asymp \log^\regfexp \iteridx$, for $\regfexp > 1$, with the same probability,
    \begin{equation*}
        \sigma_\iteridx^2(\location^\star) \in \bigo\paren*{\frac{(\log \iteridx)^{\regfexp\frac{\deff}{2}}}{\iteridx}}.
    \end{equation*}
\end{lemma}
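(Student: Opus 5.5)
Since $\domain$ is finite, $\sigma_\iteridx^2(\location^\star)$ can be controlled by the number of visits $N_\iteridx := \sum_{i=1}^\iteridx \indicator[\location_i = \location^\star]$. By monotonicity of GP posterior variances in the data, the variance given all the data is at most the variance given only the $N_\iteridx$ observations at $\location^\star$. The latter equals $\kernel(\location^\star,\location^\star)\,\regfactor_0\lossfactor^{-1}/(N_\iteridx\kernel(\location^\star,\location^\star) + \regfactor_0\lossfactor^{-1})$. Hence $\sigma_\iteridx^2(\location^\star) \leq \regfactor_0/(\lossfactor N_\iteridx)$, and the task reduces to showing $N_\iteridx \gtrsim \sum_{i\leq\iteridx}\regfactor_i^{-\deff/2}$ with probability at least $1-\delta$.

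\textbf{Favourable initialization forces selection of $\location^\star$.} Let $\Delta := \objective(\location^\star) - \max_{\location\neq\location^\star}\objective(\location) > 0$, which is positive by uniqueness of the optimum and finiteness of $\domain$. Work on the event of \autoref{thr:error-gp}, and assume it can be applied in the bounded-variance form $\abs{\model_\iteridx(\location)-\objective(\location)} \leq 2\beta_\iteridx\sigma_\iteridx(\location)$. I would split $\beta_\iteridx$ into its bias part $\sqrt{\regfactor_\iteridx/\regfactor_0}\,\paramdist(\parameters_\star,\parameters_{\iteridx,0})$ and its noise part $\gamma_\iteridx := \sqrt{2\sigma_\loss^2(\lossfactor\regfactor_0)^{-1}\log(\det(\cdot)^{1/2}/\delta)}$.

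The noise part alone does not vanish. The natural fix is to use the sharper form of \autoref{thr:error-bound}, in which the weighting operator is $\Hessian_\iteridx$ with $\regfactor_\iteridx$ rather than $\regfactor_0$. Its width $\norm{\feature(\location)}_{\Hessian_\iteridx^{-1}} \leq \sqrt{\kernel_\mdlspace(\location,\location)/\regfactor_\iteridx}$ gives $\abs{\model_\iteridx-\objective} \lesssim \paramdist(\parameters_\star,\parameters_{\iteridx,0}) + \gamma_\iteridx/\sqrt{\regfactor_\iteridx}$ uniformly in $\location$. By \autoref{rem:beta-log-t}, $\gamma_\iteridx^2 = \bigo(\card{\domain}\log\iteridx)$, and $\regfactor_\iteridx\in\omega(\log\iteridx)$, so the noise contribution tends to $0$ and is below $\Delta/4$ for $\iteridx\geq \iteridx_0$.

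The bias contribution, however, is only $\bigo(1)$ per unit of $\paramdist$, whereas the target rate $\regfactor_\iteridx^{-\deff/2}$ corresponds to radius $r_\iteridx \asymp \regfactor_\iteridx^{-1/2}$. So I would instead bound the bias directly in sup norm via the RKHS pseudometric: if $\paramdist(\parameters_\star,\parameters_{\iteridx,0}) \leq r$, then by strong convexity of \eqref{eq:random-loss} in $\Hspace_\mdlspace$ one gets $\norm{\model_\iteridx-\model_\star}_\mdlspace \lesssim r + \gamma_\iteridx/\sqrt{\regfactor_\iteridx}$. Thus a favourable event $E_\iteridx := \{\paramdist(\parameters_\star,\parameters_{\iteridx,0}) \leq c\,\regfactor_\iteridx^{-1/2}\}$ for a small constant $c$ (or any radius below $\Delta/(4\sup\sqrt{\kernel_\mdlspace})$, which only helps) guarantees $\sup_\location\abs{\model_\iteridx-\objective}<\Delta/2$. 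Hence $\location_{\iteridx+1}=\location^\star$. By \autoref{a:prior}, $\mathbb{P}(E_\iteridx\mid\filtration_{\iteridx-1}) \geq \anyconstant_\star c^\deff \regfactor_\iteridx^{-\deff/2} =: p_\iteridx$, because the initialization is fresh and independent of the past.

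\textbf{Counting visits.} We have $N_\iteridx \geq \sum_{i\leq\iteridx}\indicator[E_i] - \iteridx_0$. The centred sum of indicators is a martingale with conditional variance at most $\sum_i p_i$. Freedman's inequality, combined with a union bound over a geometric grid in $\iteridx$ (or a time-uniform version) and costing $\delta/2$, gives $\sum_{i\leq\iteridx}\indicator[E_i] \geq \tfrac12\sum_{i\leq\iteridx}p_i - \bigo(\log(\log\iteridx/\delta))$. Since $\sum p_i=\infty$ and the sum dominates its logarithm, $N_\iteridx \gtrsim \sum_{i\leq\iteridx}\regfactor_i^{-\deff/2}$ eventually. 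The first claim follows after allocating $\delta/2$ to \autoref{thr:error-gp}. For $\regfactor_\iteridx\asymp\log^\regfexp\iteridx$, the bound $\sum_{i\leq\iteridx}(\log i)^{-\regfexp\deff/2} \gtrsim \iteridx(\log\iteridx)^{-\regfexp\deff/2}$ gives the stated rate.

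\textbf{Main obstacle.} The hardest step is the second one: showing uniformly that a good initialization yields a model whose argmax is $\location^\star$. This requires reconciling the $\regfactor_0$-based widths of \autoref{thr:error-gp} with the $\regfactor_\iteridx$-scaled bias, and I would first try to redo the \autoref{thr:error-bound} argument with $\Hessian_\iteridx\succeq\regfactor_\iteridx\idop$. The upper-tail function $\pfunction$ of \autoref{a:prior} is not needed here. It would matter only for uniform bounds on $\beta_\iteridx$ in the subsequent regret theorem.
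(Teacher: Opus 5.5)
Your proposal is correct and follows essentially the paper's proof. It uses the same favourable-initialization event of radius $\propto\regfactor_\iteridx^{-1/2}$, the same use of $\Hessian_\iteridx\succeq\regfactor_\iteridx\idop$ in \autoref{thr:error-bound} to make the uniform error $\bigo(\sqrt{\log\iteridx/\regfactor_\iteridx})$ fall below half the gap, and the same Sherman--Morrison bound from the repeated observations at $\location^\star$. The only real difference is that you count favourable rounds with Freedman's inequality plus a union bound instead of the paper's second Borel--Cantelli lemma. A grid over $\iteridx$ costs an additive $\log\log\iteridx$; this is dominated when $\regfactor_\iteridx\asymp\log^\regfexp\iteridx$, but not for every admissible $\regfactor_\iteridx$ in the general claim. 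There you should peel over the values of $\sum_{i\le\iteridx}p_i$, or use the almost-sure ratio form of Borel--Cantelli for the independent events $E_i$.
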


\begin{theorem}[Approximate TS regret]
    \label{thr:ts-regret}
    Assume $\domain$ is finite and that the assumptions in \autoref{thr:variance-decay} hold. Let $\regfactor_\iteridx \asymp \log^q \iteridx$, for some $q > 1$. In addition, assume that $\pfunction(\anyscalar)$ in \autoref{a:prior} is $\bigo(\anyscalar^\ptailexp)$, for some $\ptailexp > 0$. Then, the cumulative regret for approximate Thompson sampling with random initialization is such that, with probability at least $1-\delta$,
    \begin{equation*}
        \Regret_\niter \in \bigo\paren*{
            \sqrt{\niter}(\log\niter)^{\ptailexp + \regfexp (1 + \nicefrac{\deff}{4})}
        }.
    \end{equation*}
\end{theorem}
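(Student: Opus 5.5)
We start from the regret decomposition established above. On the event of \autoref{thr:error-gp}, which has probability at least $1-\delta$, we have for every $\iteridx\geq 1$
\begin{equation*}
    \regret_\iteridx \leq \beta_{\iteridx-1}(\delta)\paren*{\sigma_{\iteridx-1}(\location^\star) + \sigma_{\iteridx-1}(\location_\iteridx)}.
\end{equation*}
Summing over $\iteridx\leq\niter$ and using that $\beta_\iteridx$ is, up to constants, non-decreasing, we get
\begin{equation*}
    \Regret_\niter \leq \max_{\iteridx\leq\niter}\beta_{\iteridx}\paren*{\sum_{\iteridx=1}^{\niter}\sigma_{\iteridx-1}(\location^\star) + \sum_{\iteridx=1}^{\niter}\sigma_{\iteridx-1}(\location_\iteridx)}.
\end{equation*}
The plan is to bound three quantities separately: $\max_{\iteridx\leq\niter}\beta_\iteridx$, the ``optimum'' sum, and the ``visited-point'' sum. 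We then combine them and split $\delta$ among the events involved via a union bound.

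\textbf{Bounding $\beta_\iteridx$.} By \autoref{rem:beta-log-t}, on a finite domain the information term is $\bigo(\sqrt{\card{\domain}\log\iteridx})$. The initialization term is $\sqrt{\regfactor_\iteridx/\regfactor_0}\,\paramdist(\parameters_\star,\parameters_{\iteridx,0})$. To control it uniformly in $\iteridx\leq\niter$, apply the upper tail in \autoref{a:prior} with $\anyscalar=\log(2\niter/\delta)$ and take a union bound over $\iteridx\leq\niter$. This gives $\paramdist(\parameters_\star,\parameters_{\iteridx,0})\leq\pfunction(\log(2\niter/\delta)) = \bigo((\log\niter)^\ptailexp)$ for all $\iteridx\leq\niter$ with probability at least $1-\delta/2$. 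With $\regfactor_\iteridx\asymp\log^\regfexp\iteridx$ and $\regfexp>1$, the first term dominates, so
\begin{equation*}
    \max_{\iteridx\leq\niter}\beta_\iteridx = \bigo\paren*{(\log\niter)^{\ptailexp+\regfexp/2}}.
\end{equation*}

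\textbf{Bounding the visited-point sum.} Use Cauchy--Schwarz, $\sum_\iteridx\sigma_{\iteridx-1}(\location_\iteridx)\leq\sqrt{\niter\sum_\iteridx\sigma^2_{\iteridx-1}(\location_\iteridx)}$, together with the standard elliptical-potential / information-gain bound. This bound is $\sum_\iteridx\sigma_{\iteridx-1}^2(\location_\iteridx)\lesssim\log\det(\eye+\lossfactor\regfactor_0^{-1}\mat\kernel_\niter)$, using that $\kernel$ is bounded so $\sigma^2\leq\kernel(\location,\location)$ and $\log(1+u)\gtrsim u$ on bounded $u$. On a finite domain the log-determinant is $\bigo(\card{\domain}\log\niter)$, so this sum is $\bigo(\sqrt{\niter\log\niter})$.

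\textbf{Bounding the optimum sum.} Invoke \autoref{thr:variance-decay} with $\regfactor_\iteridx\asymp\log^\regfexp\iteridx$, on an event of probability at least $1-\delta/2$. It gives $\sigma_\iteridx(\location^\star)=\bigo\paren*{(\log\iteridx)^{\regfexp\deff/4}\iteridx^{-1/2}}$, hence
\begin{equation*}
    \sum_{\iteridx\leq\niter}\sigma_{\iteridx-1}(\location^\star) = \bigo\paren*{\sqrt{\niter}(\log\niter)^{\regfexp\deff/4}}.
\end{equation*}
This is the dominant sum. Multiplying by the $\beta$ bound gives $\sqrt{\niter}(\log\niter)^{\ptailexp+\regfexp/2+\regfexp\deff/4}$.

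\textbf{Main obstacle: matching the exponent.} The stated exponent is $\ptailexp+\regfexp(1+\deff/4)$, which is larger than what the product above yields. The extra $\regfexp/2$ most likely comes from a cruder bound that I would adopt to be safe. Either bound $\sqrt{\regfactor_\iteridx/\regfactor_0}\,\paramdist$ by $\regfactor_\niter\,\pfunction(\cdot)$ (using $\regfactor_\iteridx\geq\regfactor_0$ and $\sqrt{\regfactor}\leq\regfactor$ for $\regfactor\geq1$), or let the constants in \autoref{thr:variance-decay} carry one more factor of $\regfactor_\niter^{1/2}$. Since $\sqrt{\niter}(\log\niter)^{\ptailexp+\regfexp/2+\regfexp\deff/4}$ lies inside the stated class, the claim follows either way.

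The genuinely delicate point is compatibility of probability events. The event of \autoref{thr:variance-decay} and the event of \autoref{thr:error-gp} must both hold with the same $\delta$-budget, so I would allocate $\delta/3$ to each of the three events. I would also make sure the log factors arising from $\log(1/\delta)$ are absorbed into the $\bigo$.
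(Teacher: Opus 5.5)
Your proposal follows essentially the same route as the paper's proof: the same regret decomposition, a union-bounded upper tail from \autoref{a:prior} to control $\beta_\iteridx$, the information-gain bound for the visited points, and \autoref{thr:variance-decay} at $\location^\star$. The one real difference is that you sum $\sigma_{\iteridx-1}(\location^\star)\lesssim(\log\iteridx)^{\regfexp\deff/4}\iteridx^{-1/2}$ directly, whereas the paper passes both sums through Cauchy--Schwarz and pays an extra $\sqrt{\log\niter}$, which it then bounds by $(\log\niter)^{\regfexp/2}$ using $\regfexp>1$. That step, not a cruder bound on $\sqrt{\regfactor_\iteridx}$, produces the stated exponent $\ptailexp+\regfexp(1+\deff/4)$, so your bound is simply sharper.

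One small correction: the visited-point term contributes $\sqrt{\niter}(\log\niter)^{\ptailexp+\regfexp/2+1/2}$, which dominates your optimum term whenever $\regfexp\deff<2$. It still lies in the stated class because $\regfexp>1$, so your conclusion holds. Also, the paper's anytime choice $\anyscalar=\log(\pi^2\iteridx^2/(6\delta))$, in place of your $\log(2\niter/\delta)$, gives a single event valid for all $\niter$ rather than one that depends on $\niter$.
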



\subsection{General acquisition functions}
The regret bound above is stated for approximate Thompson sampling, where $\model_\iteridx$ is trained as a randomized surrogate for $\objective$. This is the simplest case because $\location^\star$ is also the maximizer of the target function being learned, so the finite-domain gap condition can be applied directly. For more general LFBO methods, $\model_\iteridx$ instead approximates an acquisition target $\af_\iteridx$, constructed from utilities such as improvement or thresholded labels. The same analysis can be reused, provided the acquisition targets are eventually aligned with the objective in the following sense: there exists a limiting acquisition function $\af_\infty$ with
\begin{equation}
\argmax_{\location\in\domain} \af_\infty(\location)
\subseteq
\argmax_{\location\in\domain} \objective(\location),
\end{equation}
and, in the finite-domain case, $\af_\iteridx$ converges to $\af_\infty$ with a positive limiting gap at $\location^\star$. Under this condition, the proof of \autoref{thr:variance-decay} applies with $\af_\iteridx$ in place of $\objective$. Favorable initializations make the trained acquisition model sufficiently close to the target acquisition, the greedy maximizer is therefore preserved, and the lower-tail condition in \autoref{a:prior} gives sufficiently many visits to $\location^\star$.

This observation indicates that the approximate Thompson sampling result should be viewed as a proof-of-concept for a broader class of randomized LFBO policies. The technical burden for general LFBO is not the nonlinear-model approximation bound, which is already supplied by \autoref{thr:error-gp}, but rather the acquisition-specific alignment argument connecting the maximizers of the utility-based target $\af_\iteridx$ to the maximizers of $\objective$. For improvement-based LFBO, this additionally requires handling the effect of changing thresholds and possible relabelling of past observations. We therefore restrict the formal regret theorem to approximate Thompson sampling and leave a full treatment of time-varying LFBO acquisition targets to future work.



\section{Related work}
Our work is related to regret analyses for GP bandits and Bayesian optimization, including GP-UCB \citep{Srinivas2010} and Thompson sampling \citep{Russo2014} analyses, where uncertainty is represented directly through a Gaussian process or kernel model. It also connects to neural bandit \citep{Phan-Trong2023} and neural Thompson sampling \citep{Zhang2021nts} results, which typically obtain guarantees through linearized models, neural tangent kernels \citep{Jacot2018}, or other infinite-width approximations \citep{Matthews2018}. In contrast, our analysis treats finite-dimensional nonlinear parametric models through an RKHS $\Hspace_\mdlspace$ induced by the parameter-space kernel $\kernel_\paramspace$ and studies randomized regularized training under adaptive data collection. The work is also motivated by likelihood-free Bayesian optimization \citep{Song2022a}, including BORE \citep{Tiao2021} and related density-ratio \citep{Oliveira2022} or variational search-distribution approaches \citep{Steinberg2024}, where acquisition functions are learned directly from utility-labeled data rather than derived from an explicit GP surrogate. To the best of our knowledge, the only available cumulative regret bound for an LFBO algorithm is \citeauthor{Oliveira2022}'s result for BORE, yet their result uses a GP-based UCB construction to quantify uncertainty over the classifier \citep{Oliveira2022}. While effective for analysis, this reintroduces GP uncertainty modeling into a framework whose practical appeal largely comes from replacing GP surrogates with scalable parametric models trained directly on utility-labeled data. In contrast, our analysis provides kernel-style guarantees for nonlinear parametric models without requiring the learned model itself to be a GP or kernel machine.


The population-level equivalence results of the LFBO framework in \citet{Song2022a} rely on consistency assumptions for statistical estimators trained on non-\iid sequential data. Such assumptions are natural for defining the limiting target of LFBO, but they are nontrivial in adaptive optimization, where the sampling distribution depends on the learned acquisition model and may concentrate prematurely. The present work addresses this gap from a finite-time perspective by deriving confidence and regret bounds for regularized nonlinear parametric models under adaptive data collection, making explicit the exploration and stability conditions needed for such guarantees.

\section{Discussion and limitations}

We have developed a kernel-based framework for analyzing nonlinear parametric models trained under adaptive data collection. The main technical idea is to use a kernel over the parameters space to induce an RKHS geometry over the model class, which allows for regularized nonlinear estimators to be controlled through concentration arguments in Hilbert spaces. As a result, we obtain GP-style predictive-variance bounds for models that are not themselves Gaussian processes or kernel machines, and that provides a way to reuse information-gain tools in settings involving finite-dimensional nonlinear models without requiring their linearization. In the randomized-policy setting, the framework also clarifies how random initialization and explicit regularization can jointly induce exploration: favorable initializations occur with a rate controlled by the prior small-ball probability, while regularization ensures that such initializations are not overwhelmed by noise.

Our regret analysis instantiates this framework for finite-domain approximate Thompson sampling. To the best of our knowledge, this provides one of the first cumulative regret guarantees for nonlinear finite-dimensional randomized models in an LFBO-related setting without replacing the learned model by a GP surrogate. The proof identifies a concrete stability-recurrence tradeoff: increasing \(\regfactor_\iteridx\) stabilizes the trained model around its initialization, while also reducing the probability of drawing an initialization sufficiently close to the target model. The resulting bound remains sublinear, with an additional logarithmic factor depending on the effective small-ball dimension \(\deff\). This dimension is expressed in the RKHS-induced pseudometric, rather than being fixed a priori as the ambient parameter dimension, leaving room for sharper problem-dependent characterizations.

Several extensions remain open. The formal regret theorem is currently stated for the finite-domain, realizable, unique-optimum approximate Thompson sampling setting. These assumptions keep the proof focused on the main new difficulty: controlling the approximation error of nonlinear randomized models under adaptive sampling. A full regret analysis for general LFBO acquisition functions would additionally require acquisition-specific alignment arguments, ensuring that the maximizers of learned utility-based targets eventually agree with the maximizers of the objective. This is particularly delicate for improvement-based or classification-based LFBO methods, where changing thresholds can relabel past data and affect predictability. Extending the analysis to continuous domains would also require replacing exact visits to \(\location^\star\) with local variance-reduction arguments over neighborhoods of the optimum. We view these as natural next steps enabled by the general RKHS-based approximation bounds developed here. More broadly, the framework suggests a path toward principled non-GP-based BO, active learning, and adaptive experimental design algorithms: once a nonlinear model class can be embedded into a suitable RKHS and its induced kernel compared to a tractable reference kernel, adaptive-sampling guarantees can be derived without requiring the deployed model to be a Gaussian process or kernel machine.

\bibliographystyle{plainnat}
\bibliography{references}

\clearpage
\appendix

\section{Auxiliary results}
\begin{definition}[Sub-Gaussianity]
	\label{def:sub-g}
	A real-valued random variable $\vnoise$ taking values is said to be $\sigma_\vnoise^2$-sub-Gaussian, given $\sigma_\vnoise>0$, if:
	\begin{equation}
		\forall \anyscalar\in\R, \quad \expectation[\exp(\anyscalar \vnoise)] \leq \exp \left( \frac{1}{2} \anyscalar^2\sigma_\vnoise^2 \right).
	\end{equation}
	Likewise, a real-valued stochastic process $\{\vnoise_\nobs\}_{\nobs=1}^\infty$ adapted to a filtration $\{\filtration_\nobs\}_{\nobs=0}^\infty$ is conditionally $\vncov$-sub-Gaussian if the following almost surely holds:
	\begin{equation}
		\forall \anyscalar\in\R, \quad \expectation[\exp(\anyscalar \vnoise) \mid \filtration_{\iteridx-1}] \leq \exp \left( \frac{1}{2} \anyscalar^2\sigma_\vnoise^2 \right), \quad \forall \nobs \in \N.
	\end{equation}
\end{definition}

\begin{lemma}[{\citealp[Cor. 3.6]{Abbasi-Yadkori2012}}]
	\label{thr:noise-bound}
	Let $\{\filtration_\iteridx\}_{\iteridx=0}^\infty$ be an increasing filtration, $\{\obsnoise_\iteridx\}_{\iteridx=1}^\infty$ be a real-valued stochastic process, and $\{\anyfunction_\iteridx\}_{\iteridx=1}^\infty$ be a stochastic process taking values in a separable real Hilbert space $\Hspace$, with both processes adapted to the filtration. Assume that $\{\anyfunction_\iteridx\}_{\iteridx=1}^\infty$ is also predictable, i.e.,  $\anyfunction_\iteridx$ is $\filtration_{\iteridx-1}$-measurable, and that $\obsnoise_\iteridx$ is conditionally $\sigma_\obsnoise^2$-sub-Gaussian, for all $\iteridx \in \N$. Then, given any $\delta \in (0,1)$, with probability at least $1-\delta$, 
	\begin{equation*}
		\forall \iteridx \in \N, \quad 
		\norm*{\sum_{i=1}^\iteridx \obsnoise_i \anyfunction_i}_{
			(\operator{V} + \anyfunctions_\iteridx\anyfunctions_\iteridx^\transpose)^{-1}
		}^2 
		\leq 2\sigma_\obsnoise^2 \log \left(
		\frac{
			\det(\eye + \anyfunctions_\iteridx^\transpose \operator{V}^{-1}\anyfunctions_\iteridx)^{\frac{1}{2}}
		}{\delta}
		\right),
	\end{equation*}
	for any positive-definite operator $\operator{V} \succ 0$ on $\Hspace$, and where we set $\anyfunctions_\iteridx := [\anyfunction_1, \dots, \anyfunction_\iteridx]$.
\end{lemma}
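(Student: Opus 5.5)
The plan is to use the method of mixtures, following \citet{Abbasi-Yadkori2012}. The main complication is that $\Hspace$ may be infinite-dimensional and $\operator{V}^{-1}$ need not be trace class, so the Gaussian mixing measure cannot be a genuine Gaussian measure on $\Hspace$. I would instead mix over an isonormal-type process. By rescaling $\obsnoise_i \mapsto \obsnoise_i/\sigma_\obsnoise$, it suffices to treat $\sigma_\obsnoise = 1$. Write $\vec\obsnoise_\iteridx := [\obsnoise_1,\dots,\obsnoise_\iteridx]^\transpose$ and $S_\iteridx := \sum_{i=1}^\iteridx \obsnoise_i\anyfunction_i = \anyfunctions_\iteridx\vec\obsnoise_\iteridx$.

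\textbf{Step 1: exponential supermartingale for fixed directions.} For any fixed $h\in\Hspace$, let
\begin{equation*}
M_\iteridx^h := \exp\Big(\inner{h,S_\iteridx}_\Hspace - \tfrac12\textstyle\sum_{i=1}^\iteridx \inner{h,\anyfunction_i}_\Hspace^2\Big).
\end{equation*}
Predictability of $\anyfunction_i$ and conditional sub-Gaussianity give $\expectation[M_\iteridx^h\mid\filtration_{\iteridx-1}]\le M_{\iteridx-1}^h$. Hence $(M_\iteridx^h)_\iteridx$ is a nonnegative supermartingale with $M_0^h=1$.

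\textbf{Step 2: mixing.} Enlarge the probability space with a centered Gaussian linear process $\{W(g)\}_{g\in\Hspace}$, independent of $\filtration_\infty$, with covariance $\expectation[W(g)W(g')]=\inner{g,\operator{V}^{-1}g'}_\Hspace$. Such a process exists by Kolmogorov extension, since $\operator{V}^{-1}$ is bounded and positive. Separability of $\Hspace$ lets us choose a version that is measurable and linear on a countable dense set. Replacing $\inner{h,\cdot}$ by $W(\cdot)$ in Step 1, set
\begin{equation*}
\bar M_\iteridx := \expectation_W\Big[\exp\Big(W(S_\iteridx) - \tfrac12\textstyle\sum_{i\le \iteridx} W(\anyfunction_i)^2\Big)\Big].
\end{equation*}
Conditioning on $W$, which is independent of the filtration, Step 1 applies pathwise. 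Fubini then shows that $\bar M_\iteridx$ is again a nonnegative supermartingale with $\bar M_0 = 1$. The same process $W$ serves for all $\iteridx$, which is essential for the time-uniform claim.

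\textbf{Step 3: closed form.} Given $\filtration_\infty$, the vector $\vec w := [W(\anyfunction_i)]_{i=1}^\iteridx$ is $\mathcal N(0,\mat G_\iteridx)$ with $\mat G_\iteridx := \anyfunctions_\iteridx^\transpose\operator{V}^{-1}\anyfunctions_\iteridx$. Since $W(S_\iteridx)=\vec w^\transpose\vec\obsnoise_\iteridx$, a Gaussian integral gives
\begin{equation*}
\bar M_\iteridx = \det(\eye+\mat G_\iteridx)^{-1/2}\exp\big(\tfrac12\vec\obsnoise_\iteridx^\transpose \mat G_\iteridx(\eye+\mat G_\iteridx)^{-1}\vec\obsnoise_\iteridx\big).
\end{equation*}
The push-through/Woodbury identity gives $\anyfunctions_\iteridx^\transpose(\operator{V}+\anyfunctions_\iteridx\anyfunctions_\iteridx^\transpose)^{-1}\anyfunctions_\iteridx = \mat G_\iteridx(\eye+\mat G_\iteridx)^{-1}$. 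This identity holds for bounded operators, since $\anyfunctions_\iteridx$ has finite rank. Consequently the exponent equals $\tfrac12\norm{S_\iteridx}^2_{(\operator{V}+\anyfunctions_\iteridx\anyfunctions_\iteridx^\transpose)^{-1}}$.

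\textbf{Step 4: uniform-in-time bound.} Ville's maximal inequality for nonnegative supermartingales gives $\prob{\exists \iteridx: \bar M_\iteridx\ge 1/\delta}\le\delta$. Alternatively, one can use the stopping time $\tau=\min\{\iteridx:\bar M_\iteridx\ge1/\delta\}$ together with Fatou's lemma. Taking logarithms of $\bar M_\iteridx<1/\delta$ and rearranging yields the stated inequality, after undoing the rescaling by $\sigma_\obsnoise^2$.

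I expect the main obstacle to be Step 2. The difficulty is making the infinite-dimensional mixture rigorous: constructing a jointly measurable Gaussian process that can be evaluated at the random, $\filtration$-predictable vectors $\anyfunction_i$, and justifying the Fubini exchange. Separability of $\Hspace$ and continuity of $W$ in $L^2$ are what make this work. Once that is settled, Steps 3 and 4 are finite-dimensional algebra and a standard maximal inequality.
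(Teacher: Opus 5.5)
Your argument is correct, and it is a self-contained version of the method-of-mixtures proof behind the cited result; the paper itself gives no proof and simply imports the lemma from \citet{Abbasi-Yadkori2012}. The four ingredients are a per-direction exponential supermartingale, Gaussian mixing with covariance $V^{-1}$ independent of $\mathcal{F}_\infty$, the closed-form Gaussian integral with the push-through identity, and Ville's inequality; this is valid for each fixed $V$ with bounded inverse, which covers the paper's use $V=\lambda_0 I$.

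The one point to tighten is Step 2. A version of $W$ that is merely linear on a countable dense set cannot be evaluated at the random $h_i$, because its sample paths are in general not continuous on an infinite-dimensional $\mathcal{H}$ (for $V=\lambda_0 I$ they are a.s.\ unbounded). Instead, take a jointly measurable version such as $W(g):=\lim_{n}\sum_{k\le n}\langle V^{-1/2}g,e_k\rangle\xi_k$, set to $0$ where the limit fails. Then write the exponent as $\sum_{i}\bigl(\epsilon_i W(h_i)-\tfrac12 W(h_i)^2\bigr)$ rather than through $W(S_t)$. Step 1 then applies for every fixed realization of $W$ without pathwise linearity, and Tonelli on the product space gives both the supermartingale property and the closed form.
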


\section{Proofs of main results}
\label{sec:proofs}

\subsection{Proof of \autoref{thr:model-matern}}
\begin{proof}
    Let $m:=\nu+d/2$. The RKHS of a Matérn kernel on $\domain\subset\mathbb R^d$ with smoothness parameter $\nu$ is norm-equivalent to the Sobolev space $H^m(\domain)$, up to the usual restriction to the compact domain $\domain$ \citep{Kanagawa2018}. Hence, it is enough to show that every $h\in \Hspace_\mdlspace$ has Sobolev norm controlled by its $\Hspace_\mdlspace$-norm.
    
    By Lemma~3.1, for every $h\in \Hspace_\mdlspace$ and every $\eta>0$, there exists $w_\eta\in \Hspace_\paramspace$ such that
    \begin{equation}
    h(\location)
    =
    \inner*{w_\eta, g(\location,\cdot)}_{\Hspace_\paramspace},
    \qquad
    \forall \location\in\domain,
    \end{equation}
    and
    \begin{equation}
    \norm{w_\eta}_{\Hspace_\paramspace}
    \le
    \norm{h}_{\Hspace_\mdlspace}+\eta.
    \end{equation}
    For every multi-index $\vec{\alpha}$ with $\norm{\vec{\alpha}}_1\le s$, differentiating under the inner product gives
    \begin{equation}
    D^\vec{\alpha}_\location h(\location)
    =
    \inner*{
    w_\eta,
    D^\vec{\alpha}_\location g(\location,\cdot)
    }_{\Hspace_\paramspace}.
    \end{equation}
    Therefore, by Cauchy-Schwarz and the assumption of the lemma,
    \begin{equation}
    |D^\vec{\alpha}_\location h(\location)|
    \le
    \norm{w_\eta}_{\Hspace_\paramspace}
    \sup_{\location\in\domain}
    \norm*{D^\vec{\alpha}_\location g(\location,\cdot)}_{\Hspace_\paramspace}
    \le
    C_\vec{\alpha}\left(\norm{h}_{\Hspace_\mdlspace}+\eta\right),
    \end{equation}
    where
    \begin{equation}
    C_\vec{\alpha}
    :=
    \sup_{\location\in\domain}
    \norm*{D^\vec{\alpha}_\location g(\location,\cdot)}_{\Hspace_\paramspace}
    <\infty.
    \end{equation}
    Since $\domain$ is compact, this implies
    \begin{equation}
    \norm{D^\vec{\alpha} h}_{L^2(\domain)}
    \le
    \vol{\domain}^{1/2}C_\vec{\alpha}\left(\norm{h}_{\Hspace_\mdlspace}+\eta\right),
    \end{equation}
    where $\vol{\domain}$ is the volume of the domain, i.e., the Lebesgue measure of $\domain \subset \R^\locdim$.
    Summing over all $\vec{\alpha}$ with $\norm{\vec{\alpha}}_1\le \anyscalar$, we obtain
    \begin{equation}
    \norm{h}_{H^s(\domain)}
    \le
    C_\anyscalar\left(\norm{h}_{\Hspace_\mdlspace}+\eta\right),
    \end{equation}
    for a finite constant $C_\anyscalar>0$. Letting $\eta\downarrow0$ yields
    \begin{equation}
    \norm{h}_{H^s(\domain)}
    \le
    C_\anyscalar\norm{h}_{\Hspace_\mdlspace}.
    \end{equation}
    
    Because $s\ge \nu+d/2=m$, the Sobolev-scale inclusion gives a continuous embedding
    \begin{equation}
    H^s(\domain)\subseteq H^m(\domain).
    \end{equation}
    Since $H^m(\domain)$ is norm-equivalent to the RKHS $\Hspace_\kernel$ of the Matérn kernel $\kernel$, there exists a constant $C_\kernel>0$ such that
    \begin{equation}
    \norm{h}_{\Hspace_\kernel}
    \le
    C_\kernel\norm{h}_{H^s(\domain)}
    \le
    C_\kernel C_\anyscalar\norm{h}_{\Hspace_\mdlspace}.
    \end{equation}
    Thus $\Hspace_\mdlspace\subseteq \Hspace_\kernel$ continuously.
    
    Finally, by the RKHS inclusion theorem of \citet[Thm. II, Sec. 7]{Aronszajn1950}, the continuous embedding
    \begin{equation}
    \Hspace_\mdlspace\subseteq \Hspace_\kernel,
    \qquad
    \norm{h}_{\Hspace_\kernel}\le b_\mdlspace\norm{h}_{\Hspace_\mdlspace},
    \end{equation}
    with $b_\mdlspace:=C_\kernel C_\anyscalar<\infty$, implies the kernel domination
    \begin{equation}
    k_\mdlspace\preceq b_\mdlspace^2\kernel.
    \end{equation}
    This proves both $\Hspace_\mdlspace\subseteq \Hspace_\kernel$ and $k_\mdlspace\preceq b_\mdlspace^2\kernel$.
\end{proof}

\subsection{Proof of \autoref{thr:error-bound}}
\begin{proof}
Let
\begin{equation}
\widehat{\model}_\iteridx
\in
\argmin_{\model\in\Hspace_\mdlspace}
L_\iteridx(\model)
\end{equation}
denote the minimizer of the regularized loss over the full RKHS, where
\begin{equation}
L_\iteridx(\model)
:=
\frac{\regfactor_\iteridx}{2}
\norm{\model-\model_{\iteridx,0}}_\mdlspace^2
+
\sum_{i=1}^{\iteridx}
\loss(\model(\location_i),\observation_i).
\end{equation}
Also write
\begin{equation}
\model_\star:=\model(\placeholder,\parameters_\star),
\qquad
\Delta_\iteridx:=\widehat{\model}_\iteridx-\model_\star .
\end{equation}
The proof first controls $\widehat{\model}_\iteridx-\model_\star$ in the
$\Hessian_\iteridx$-norm, and then converts this into a pointwise bound.

Since $\widehat{\model}_\iteridx$ minimizes $L_\iteridx$ over
$\Hspace_\mdlspace$, the first-order optimality condition gives
\begin{equation}
\nabla L_\iteridx(\widehat{\model}_\iteridx)=0.
\end{equation}
Therefore,
\begin{equation}
\nabla L_\iteridx(\widehat{\model}_\iteridx)
-
\nabla L_\iteridx(\model_\star)
=
-\nabla L_\iteridx(\model_\star).
\end{equation}
Taking the inner product with $\Delta_\iteridx$ gives
\begin{equation}
\inner*{
\Delta_\iteridx,
\nabla L_\iteridx(\widehat{\model}_\iteridx)
-
\nabla L_\iteridx(\model_\star)
}_\mdlspace
=
-\inner*{\Delta_\iteridx,\nabla L_\iteridx(\model_\star)}_\mdlspace.
\end{equation}

We now lower-bound the left-hand side. The Hessian of the regularizer is
$\regfactor_\iteridx\eye$. Moreover, by \autoref{a:loss} and the reproducing
property,
\begin{equation}
\nabla^2
\loss(\model(\location_i),\observation_i)
=
\ddot\loss_{\observation_i}(\model(\location_i))
\feature(\location_i)\otimes\feature(\location_i)
\succeq
\lossfactor
\feature(\location_i)\otimes\feature(\location_i).
\end{equation}
Hence, along the line segment between $\model_\star$ and
$\widehat{\model}_\iteridx$,
\begin{equation}
\nabla^2 L_\iteridx(\model)
\succeq
\regfactor_\iteridx\eye
+
\lossfactor
\sum_{i=1}^{\iteridx}
\feature(\location_i)\otimes\feature(\location_i)
=:
\Hessian_\iteridx .
\end{equation}
By strong convexity of \(L_\iteridx\) with curvature operator \(\Hessian_\iteridx\),
\begin{align}
&
\inner*{
\Delta_\iteridx,
\nabla L_\iteridx(\widehat{\model}_\iteridx)
-
\nabla L_\iteridx(\model_\star)
}_\mdlspace
\nonumber\\
&\qquad =
\int_0^1
\inner*{
\Delta_\iteridx,
\nabla^2 L_\iteridx(\model_\star+s\Delta_\iteridx)\Delta_\iteridx
}_\mdlspace
\,ds
\nonumber\\
&\qquad \ge
\norm{\Delta_\iteridx}_{\Hessian_\iteridx}^2 .
\end{align}
Combining this with the previous identity yields
\begin{equation}
\norm{\Delta_\iteridx}_{\Hessian_\iteridx}^2
\le
-\inner*{\Delta_\iteridx,\nabla L_\iteridx(\model_\star)}_\mdlspace
\le
\norm{\Delta_\iteridx}_{\Hessian_\iteridx}
\norm{\nabla L_\iteridx(\model_\star)}_{\Hessian_\iteridx^{-1}}.
\end{equation}
Therefore,
\begin{equation}
\norm{\widehat{\model}_\iteridx-\model_\star}_{\Hessian_\iteridx}
\le
\norm{\nabla L_\iteridx(\model_\star)}_{\Hessian_\iteridx^{-1}}.
\end{equation}

Expanding the gradient at $\model_\star$ gives
\begin{equation}
\nabla L_\iteridx(\model_\star)
=
\regfactor_\iteridx(\model_\star-\model_{\iteridx,0})
+
\sum_{i=1}^{\iteridx}
\dot\loss_{\observation_i}(\model_\star(\location_i))
\feature(\location_i).
\end{equation}
Consequently,
\begin{align}
\norm{\widehat{\model}_\iteridx-\model_\star}_{\Hessian_\iteridx}
&\le
\norm*{
\regfactor_\iteridx(\model_\star-\model_{\iteridx,0})
}_{\Hessian_\iteridx^{-1}}
+
\norm*{
\sum_{i=1}^{\iteridx}
\dot\loss_{\observation_i}(\model_\star(\location_i))
\feature(\location_i)
}_{\Hessian_\iteridx^{-1}}
\nonumber\\
&\le
\sqrt{\regfactor_\iteridx}
\norm{\model_\star-\model_{\iteridx,0}}_\mdlspace
+
\norm*{
\sum_{i=1}^{\iteridx}
\dot\loss_{\observation_i}(\model_\star(\location_i))
\feature(\location_i)
}_{\Hessian_\iteridx^{-1}},
\end{align}
where the last inequality follows from
$\Hessian_\iteridx\succeq \regfactor_\iteridx\eye$.

Define
\begin{equation}
\error_i
:=
\dot\loss_{\observation_i}(\model_\star(\location_i)).
\end{equation}
By assumption, $\error_i$ is conditionally $\sigma_\loss^2$-sub-Gaussian
given $\filtration_{i-1}$, and $\feature(\location_i)$ is predictable. Since
$\regfactor_0\leq \regfactor_\iteridx$, we have
\begin{equation}
\Hessian_\iteridx
\succeq
\regfactor_0\eye
+
\lossfactor
\sum_{i=1}^{\iteridx}\feature(\location_i)\otimes\feature(\location_i).
\end{equation}
Therefore,
\begin{equation}
\norm*{
\sum_{i=1}^{\iteridx}
\error_i\feature(\location_i)
}_{\Hessian_\iteridx^{-1}}
\le
\norm*{
\sum_{i=1}^{\iteridx}
\error_i\feature(\location_i)
}_{\left(\regfactor_0\eye+
\lossfactor\sum_{i=1}^{\iteridx}\feature(\location_i)\otimes\feature(\location_i)\right)^{-1}}.
\end{equation}
Applying \autoref{thr:noise-bound} with the fixed operator
$V=\regfactor_0\eye$ and
$h_i=\sqrt{\lossfactor}\feature(\location_i)$ yields, with probability at
least $1-\delta$, simultaneously for all $\iteridx\in\N$,
\begin{equation}
\norm*{
\sum_{i=1}^{\iteridx}
\error_i\feature(\location_i)
}_{\Hessian_\iteridx^{-1}}
\le
\sigma_\loss
\sqrt{
2\lossfactor^{-1}
\log\left(
\frac{
\det(\eye+\lossfactor\regfactor_0^{-1}\mat{\kernel}_\iteridx^\mdlspace)^{1/2}
}{\delta}
\right)
}.
\end{equation}
Thus,
\begin{equation}
\norm{\widehat{\model}_\iteridx-\model_\star}_{\Hessian_\iteridx}
\le
\widehat\beta_\iteridx(\delta),
\end{equation}
where
\begin{equation}
\widehat\beta_\iteridx(\delta)
=
\sqrt{\regfactor_\iteridx}
\norm{\model_\star-\model_{\iteridx,0}}_\mdlspace
+
\sigma_\loss
\sqrt{
2\lossfactor^{-1}
\log\left(
\frac{
\det(\eye+\lossfactor\regfactor_0^{-1}\mat{\kernel}_\iteridx^\mdlspace)^{1/2}
}{\delta}
\right)
}.
\end{equation}

For any $\location\in\domain$, by the reproducing property and
Cauchy-Schwarz,
\begin{equation}
\abs{\widehat{\model}_\iteridx(\location)-\model_\star(\location)}
=
\abs{\inner*{
\widehat{\model}_\iteridx-\model_\star,
\feature(\location)
}_\mdlspace}
\le
\norm{\widehat{\model}_\iteridx-\model_\star}_{\Hessian_\iteridx}
\norm{\feature(\location)}_{\Hessian_\iteridx^{-1}}.
\end{equation}
Therefore,
\begin{equation}
\abs{\widehat{\model}_\iteridx(\location)-\model_\star(\location)}
\le
\widehat\beta_\iteridx(\delta)
\norm{\feature(\location)}_{\Hessian_\iteridx^{-1}}.
\end{equation}


It remains to pass from the RKHS minimizer \(\widehat{\model}_\iteridx\) to the
deployed parametric model \(\model_\iteridx\). Since
\(\parameters_\iteridx\) minimizes \autoref{eq:random-loss} over the
parametric class and \(\model_\star=\model(\placeholder,\parameters_\star)\)
belongs to this class, we have
\begin{equation}
L_\iteridx(\model_\iteridx)
\le
L_\iteridx(\model_\star).
\end{equation}
Moreover, since \(\widehat{\model}_\iteridx\) minimizes \(L_\iteridx\) over
\(\Hspace_\mdlspace\), the strong convexity lower bound gives
\begin{equation}
\frac{1}{2}
\norm{\model_\iteridx-\widehat{\model}_\iteridx}_{\Hessian_\iteridx}^2
\le
L_\iteridx(\model_\iteridx)
-
L_\iteridx(\widehat{\model}_\iteridx).
\end{equation}
Combining the last two displays,
\begin{equation}
\frac{1}{2}
\norm{\model_\iteridx-\widehat{\model}_\iteridx}_{\Hessian_\iteridx}^2
\le
L_\iteridx(\model_\star)
-
L_\iteridx(\widehat{\model}_\iteridx).
\end{equation}
Applying the upper loss-geometry bound at \(\model_\star\), we also have
\begin{equation}
L_\iteridx(\model_\star)
-
L_\iteridx(\widehat{\model}_\iteridx)
\le
\frac{1}{2}
\norm{\nabla L_\iteridx(\model_\star)}_{\Hessian_\iteridx^{-1}}^2.
\end{equation}
Therefore,
\begin{equation}
\norm{\model_\iteridx-\widehat{\model}_\iteridx}_{\Hessian_\iteridx}
\le
\norm{\nabla L_\iteridx(\model_\star)}_{\Hessian_\iteridx^{-1}}.
\end{equation}
On the concentration event derived above,
\begin{equation}
\norm{\nabla L_\iteridx(\model_\star)}_{\Hessian_\iteridx^{-1}}
\le
\widehat\beta_\iteridx(\delta),
\end{equation}
and hence
\begin{equation}
\norm{\model_\iteridx-\widehat{\model}_\iteridx}_{\Hessian_\iteridx}
\le
\widehat\beta_\iteridx(\delta).
\end{equation}

For any \(\location\in\domain\), the reproducing property and
Cauchy--Schwarz give
\begin{equation}
\abs{\model_\iteridx(\location)-\widehat{\model}_\iteridx(\location)}
\le
\norm{\model_\iteridx-\widehat{\model}_\iteridx}_{\Hessian_\iteridx}
\norm{\feature(\location)}_{\Hessian_\iteridx^{-1}}
\le
\widehat\beta_\iteridx(\delta)
\norm{\feature(\location)}_{\Hessian_\iteridx^{-1}}.
\end{equation}
The RKHS-minimizer part of the proof already gives
\begin{equation}
\abs{\widehat{\model}_\iteridx(\location)-\model_\star(\location)}
\le
\widehat\beta_\iteridx(\delta)
\norm{\feature(\location)}_{\Hessian_\iteridx^{-1}}.
\end{equation}
Thus, by the triangle inequality,
\begin{equation}
\abs{\model_\iteridx(\location)-\model_\star(\location)}
\le
2\widehat\beta_\iteridx(\delta)
\norm{\feature(\location)}_{\Hessian_\iteridx^{-1}},
\qquad
\forall \location\in\domain,\ \forall \iteridx\ge 0.
\end{equation}
This proves the claim.
\end{proof}

\subsection{Proof of \autoref{thr:error-gp}}
\begin{proof}
    Let
    \begin{equation}
    \Hessian_\iteridx
    =
    \regfactor_\iteridx \eye
    +
    \lossfactor
    \sum_{i=1}^{\iteridx}
    \feature_\mdlspace(\location_i)\otimes \feature_\mdlspace(\location_i),
    \end{equation}
    where $\feature_\mdlspace(\location):=k_\mdlspace(\cdot,\location)$. Since $\lambda_0\le \regfactor_\iteridx$, we have
    \begin{equation}
    \Hessian_\iteridx
    \succeq
    \lambda_0 \eye
    +
    \lossfactor
    \sum_{i=1}^{\iteridx}
    \feature_\mdlspace(\location_i)\otimes \feature_\mdlspace(\location_i).
    \end{equation}
    Therefore,
    \begin{equation}
    \Hessian_\iteridx^{-1}
    \preceq
    \left(
    \lambda_0 \eye
    +
    \lossfactor
    \sum_{i=1}^{\iteridx}
    \feature_\mdlspace(\location_i)\otimes \feature_\mdlspace(\location_i)
    \right)^{-1}.
    \end{equation}
    It follows that
    \begin{equation}
    \|\feature_\mdlspace(\location)\|_{\Hessian_\iteridx^{-1}}^2
    \le
    \left\langle
    \feature_\mdlspace(\location),
    \left(
    \lambda_0 \eye
    +
    \lossfactor
    \sum_{i=1}^{\iteridx}
    \feature_\mdlspace(\location_i)\otimes \feature_\mdlspace(\location_i)
    \right)^{-1}
    \feature_\mdlspace(\location)
    \right\rangle_\mdlspace .
    \end{equation}
    By Woodbury's identity, 
    \begin{equation}
    \|\feature_\mdlspace(\location)\|_{\Hessian_\iteridx^{-1}}^2  \leq \lambda_0^{-1}
    \left[
    k_\mdlspace(\location,\location)
    -
    \vec\kernel_{\mdlspace,\iteridx}^\top
    \left(
    \Kernel^\mdlspace_\iteridx
    +
    \lambda_0\lossfactor^{-1}\eye
    \right)^{-1}
    \vec\kernel_{\mdlspace,\iteridx}
    \right],
    \end{equation}
    where
    $\vec\kernel_{\mdlspace,\iteridx}:=[k_\mdlspace(\location,\location_i)]_{i=1}^{\iteridx}$ and
    $\Kernel^\mdlspace_\iteridx:=[k_\mdlspace(\location_i,\location_j)]_{i,j=1}^{\iteridx}$.
    
    We now compare the posterior variances induced by $k_\mdlspace$ and $\kernel$. Let
    $r:=\lambda_0\lossfactor^{-1}$. For any positive-semidefinite kernel
    $\widetilde\kernel$, the GP posterior variance at $\location$ with noisy observations at
    $\location_1,\ldots,\location_\iteridx$ and ridge $r$ is the Schur complement
    \begin{equation}
    \sigma_{\widetilde\kernel,\iteridx}^2(\location)
    =
    \widetilde\kernel(\location,\location)
    -
    \vec{\widetilde\kernel}_\iteridx(\location)^\top
    (\mat{\widetilde K}_\iteridx+r\eye)^{-1}
    \vec{\widetilde\kernel}_\iteridx(\location).
    \end{equation}
    Since $\kernel-k_\mdlspace$ is positive semidefinite, the joint covariance matrix over
    $(\location,\location_1,\ldots,\location_\iteridx)$ induced by $\kernel$ dominates the
    corresponding joint covariance matrix induced by $k_\mdlspace$. Hence the conditional variance,
    given by the Schur complement, is also larger:
    \begin{equation}
    \sigma_{k_\mdlspace,\iteridx}^2(\location)
    \le
    \sigma_{\kernel,\iteridx}^2(\location).
    \end{equation}
    
    It remains to compare the log-determinant term. Since $\kernel\succeq k_\mdlspace$ and $\lambda_0\le \regfactor_\iteridx$,
    \begin{equation}
    \lossfactor\regfactor_\iteridx^{-1}\Kernel^\mdlspace_\iteridx
    \preceq
    \lossfactor\lambda_0^{-1}\Kernel_\iteridx.
    \end{equation}
    By monotonicity of the determinant over positive-semidefinite matrices,
    \begin{equation}
    \log\det\left(
    \eye+\lossfactor\regfactor_\iteridx^{-1}\Kernel^\mdlspace_\iteridx
    \right)
    \le
    \log\det\left(
    \eye+\lossfactor\lambda_0^{-1}\Kernel_\iteridx
    \right).
    \end{equation}
    
    Combining this with \autoref{thr:error-bound}, on the same event of probability at least $1-\delta$,
    \begin{align}
    |g(\location,\parameters_\iteridx)-g(\location,\parameters^\star)|
    &\le
    2\widehat\beta_\iteridx(\delta)\|\feature_\mdlspace(\location)\|_{\Hessian_\iteridx^{-1}} \\
    &\le
    2\lambda_0^{-1/2}\widehat\beta_\iteridx(\delta)\sigma_\iteridx(\location) \\
    &\le
    2\beta_\iteridx(\delta)\sigma_\iteridx(\location),
    \end{align}
    where
    \begin{equation}
    \beta_\iteridx(\delta)
    =
    \sqrt{\frac{\regfactor_\iteridx}{\lambda_0}}
    \|g^\star-\model_{\iteridx,0}\|_\mdlspace
    +
    \sqrt{
    \frac{2\sigma_\ell^2}{\lossfactor\lambda_0}
    \log\left(
    \frac{\det(\eye+\lossfactor\lambda_0^{-1}\Kernel_\iteridx)^{1/2}}{\delta}
    \right)
    }.
    \end{equation}
    This proves the claim.
\end{proof}

\subsection{Proof of \autoref{thr:variance-decay}}
\begin{proof}
Let
\begin{equation}
\event_\iteridx :=
\left\{
\distance(\parameters^\star,\parameters_{\iteridx,0}) \le \regfactor_\iteridx^{-1/2}
\right\}.
\end{equation}
By \autoref{a:prior}, for all sufficiently large $\iteridx$,
\begin{equation}
\prob{\event_\iteridx}
\ge
c_\star \regfactor_\iteridx^{-\deff/2}.
\end{equation}
Since $\parameters_{\iteridx,0}\sim\paramprior$ is sampled independently of the past at round $\iteridx$, the same lower bound holds conditionally on the past before drawing $\parameters_{\iteridx,0}$.

We first show that, on the high-probability event of \autoref{thr:error-bound}, $\event_\iteridx$ eventually implies $\location_{\iteridx+1}=\location^\star$. On $\event_\iteridx$, the initialization term in $\widehat\beta_\iteridx(\delta)$ satisfies:
\begin{equation}
\sqrt{\regfactor_\iteridx}\|g^\star-\model_{\iteridx,0}\|_\mdlspace
\le 1.
\end{equation}
Moreover, since $\domain$ is finite, the log-determinant term in Theorem~4.1 is $O(\sqrt{\log \iteridx})$. Since:
\begin{equation}
H_\iteridx \succeq \regfactor_\iteridx I,
\end{equation}
we have:
\begin{equation}
\|\varphi(\location)\|_{H_\iteridx^{-1}}
\le
\frac{\|\varphi(\location)\|_\mdlspace}{\sqrt{\regfactor_\iteridx}}
\le
\frac{\kappa_\mdlspace}{\sqrt{\regfactor_\iteridx}},
\qquad
\kappa_\mdlspace^2:=\sup_{\location\in \domain} k_\mdlspace(\location,\location)<\infty.
\end{equation}
Therefore, on $\event_\iteridx$, $\widehat\beta_\iteridx$ is $\bigo(\sqrt{\log\iteridx})$ (recall \autoref{rem:beta-log-t}), and:
\begin{equation}
\sup_{\location\in \domain}|\model(\location,\parameters_\iteridx)-g(\location,\parameters^\star)|
=
O\!\left(\sqrt{\frac{\log \iteridx}{\regfactor_\iteridx}}\right).
\end{equation}
Because $\regfactor_\iteridx\in\omega(\log \iteridx)$, the right-hand side converges to zero. Since $\location^\star$ is the unique maximizer of $\objective=\model(\cdot,\parameters^\star)$ over the finite set $\domain$, the gap
\begin{equation}
\Delta_\star :=
\objective(\location^\star)-\max_{\location\neq \location^\star} \objective(\location)
\end{equation}
is strictly positive. Hence, for all sufficiently large $\iteridx$,
\begin{equation}
\sup_{\location\in \domain}|\model(\location,\parameters_\iteridx)-\objective(\location)| < \Delta_\star/2,
\end{equation}
which implies that the maximizer of $\model(\cdot,\parameters_\iteridx)$ is $\location^\star$. Thus, eventually,
\begin{equation}
\event_\iteridx \subseteq \{\location_{\iteridx+1}=\location^\star\}.
\end{equation}
Consequently,
\begin{equation}
\prob{\location_{\iteridx+1}=\location^\star\mid \dataset_{\iteridx}}
\ge
c_\star \regfactor_\iteridx^{-\deff/2}
\end{equation}
for all sufficiently large $\iteridx$.

Let
\begin{equation}
N_\iteridx^\star:=\sum_{i=1}^\iteridx \indicator[\location_i=\location^\star].
\end{equation}
Since
\begin{equation}
\sum_{\iteridx=1}^\infty \regfactor_\iteridx^{-\deff/2}=\infty,
\end{equation}
an application of the second Borel-Cantelli lemma \citep[Thm. 4.5.5]{Durrett2019} gives
\begin{equation}
N_\iteridx^\star
=
\Omega\!\left(
\sum_{i=1}^\iteridx \regfactor_i^{-\deff/2}
\right)
\qquad\text{a.s.}
\end{equation}

It remains to translate visits to $\location^\star$ into predictive-variance decay. Since \autoref{thr:error-gp} uses the fixed-ridge GP variance
\begin{equation}
\sigma_\iteridx^2(\location)
=
\kernel(\location,\location)-\vec\kernel_\iteridx(\location)^\top
(\Kernel_\iteridx+\regfactor_0\lossfactor^{-1}\eye)^{-1}\vec\kernel_\iteridx(\location),
\end{equation}
retaining only the $N_\iteridx^\star$ observations collected at $\location^\star$ can only increase the posterior variance. Writing $\kappa_\star^2:=\kernel(\location^\star,\location^\star)>0$, the variance at $\location^\star$ based only on these repeated observations is
\begin{equation}
\kappa_\star^2
-
\kappa_\star^4
\vec 1^\top
\left(
\kappa_\star^2\vec 1\vec 1^\top
+
\regfactor_0\lossfactor^{-1} \eye
\right)^{-1}
\vec 1
\ge \sigma_\iteridx^2(\location^\star)
,
\end{equation}
where $\vec 1\in\R^{N_\iteridx^\star}$. By the Sherman-Morrison identity, this equals
\begin{equation}
\frac{\kappa_\star^2\regfactor_0\lossfactor^{-1}}
{\regfactor_0\lossfactor^{-1}+\kappa_\star^2N_\iteridx^\star}
=
\bigo\left(\frac{1}{N_\iteridx^\star}\right).
\end{equation}
Therefore,
\begin{equation}
\sigma_\iteridx^2(\location^\star)
=
\bigo\left(
\left[
\sum_{i=1}^\iteridx \regfactor_i^{-\deff/2}
\right]^{-1}
\right).
\end{equation}

If $\regfactor_\iteridx\asymp \log^q \iteridx$, with $q>1$, then
\begin{equation}
\sum_{i=2}^\iteridx \regfactor_i^{-\deff/2}
\asymp
\sum_{i=2}^\iteridx (\log i)^{-q \deff/2}
\asymp
\frac{\iteridx}{(\log \iteridx)^{q \deff/2}}.
\end{equation}
Hence,
\begin{equation}
\sigma_\iteridx^2(\location^\star)
=
\bigo\paren*{
\frac{(\log \iteridx)^{q \deff/2}}{\iteridx}
},
\end{equation}
as claimed.
\end{proof}

\subsection{Proof of \autoref{thr:ts-regret}}
\begin{proof}
    Our analysis will condition on the events:
    \begin{align}
        \errorevent(\delta) &:= \braces{
            \forall\iteridx \geq 1,
            \quad
            \abs{\objective(\location) - \model_\iteridx(\location)}
            \leq \beta_\iteridx(\delta) \sigma_{\iteridx-1}(\location),
            \quad
            \forall \location\in\domain
        }\\
        \initevent(\delta) &:= \braces{
            \forall\iteridx \geq 0,
            \quad
            \paramdist(\parameters_\star, \parameters_{\iteridx,0}) \leq \distbound_\iteridx(\delta)
        },
    \end{align}
    where setting $\distbound_\iteridx(\delta) := \pfunction\paren*{\log \frac{\pi^2\iteridx^2}{6\delta}}$ (as in \autoref{a:prior}) yields $\prob{\initevent(\delta)} \geq 1 - \delta$, and, according to \autoref{thr:error-gp}, $\prob{\errorevent(\delta)} \geq 1-\delta$. Then, by an union bound,
    \begin{equation}
        \prob{\initevent(\delta) \cap \errorevent(\delta)} \geq 1 - 2\delta.
    \end{equation}
    Conditioning on $\errorevent(\delta)$, as discussed in \autoref{sec:ts-analysis}, it holds that:
    \begin{equation}
        \regret_\iteridx \leq \beta_{\iteridx-1}(\delta) (\sigma_{\iteridx-1}(\location^\star) + \sigma_{\iteridx-1}(\location_\iteridx)), \quad \forall\iteridx\in\N.
    \end{equation}
    For $\beta_\iteridx$, on $\initevent(\delta)$, we have that:
    \begin{equation}
        \begin{split}
            \beta_\iteridx(\delta) 
            &\leq \beta_\iteridx^\star(\delta) := \distbound_\iteridx(\delta)\sqrt{\frac{\regfactor_\iteridx}{\regfactor_0}} + \sqrt{\frac{2\sigma_\loss^2}{\lossfactor\regfactor_0}\log(\det(\eye + \lossfactor\regfactor_0^{-1}\mat\kernel_\iteridx)^\half/\delta)}, \quad \forall\iteridx \in \N,
        \end{split}
    \end{equation}
    which is non-decreasing, so that:
    \begin{equation*}
        \beta_\iteridx^\star(\delta) 
        \in \bigo(\sqrt{\regfactor_\iteridx}(\log \iteridx)^\ptailexp + \sqrt{\card{\domain}\log\iteridx})
        = \bigo((\log \iteridx)^{\ptailexp + \regfexp/2} + \sqrt{\card{\domain}\log\iteridx}),
    \end{equation*}
    as discussed in \autoref{rem:beta-log-t}. On $\errorevent(\delta) \cap \initevent(\delta)$, the cumulative regret is then bounded by:
    \begin{equation}
        \begin{split}
            \Regret_\niter
            = \sum_{\iteridx=1}^\niter \regret_\iteridx
            &\leq \sum_{\iteridx=1} \beta_{\iteridx-1}(\delta) (\sigma_{\iteridx-1}(\location^\star) + \sigma_{\iteridx-1}(\location_\iteridx))\\
            &\leq \beta_\niter^\star(\delta) \sum_{\iteridx=1}  (\sigma_{\iteridx-1}(\location^\star) + \sigma_{\iteridx-1}(\location_\iteridx))\\
            &\leq 
            \beta_\niter^\star(\delta)
            \sqrt{
                \niter
                \paren*{
                    \sum_{\iteridx=1}^\niter \sigma_{\iteridx-1}^2(\location^\star)
                    +
                    \sum_{\iteridx=1}^\niter \sigma_{\iteridx-1}^2(\location_\iteridx)
                }
            },
        \end{split}
    \end{equation}
    where an application of the Cauchy-Schwarz inequality yields the last line.
    Considering the sum of $\sigma_{\iteridx-1}^2(\location^\star)$, for large $\niter$, we have that $\sum_{\iteridx=1}^\niter \frac{1}{\iteridx}$ is $\bigo(\log\niter)$, so that, $\sum_{\iteridx=1}^\niter \frac{\log^\anyscalar\iteridx}{\iteridx}$ is $\bigo(\log^{\anyscalar+1}\niter)$, for any $\anyscalar > 0$. Now applying \autoref{thr:variance-decay} with $\regfactor_\iteridx \asymp \log^\regfexp\iteridx$, we obtain:
    \begin{equation}
        \sum_{\iteridx=1}^\niter \sigma_{\iteridx-1}^2(\location^\star)
        \in \bigo\paren*{
            (\log\niter)^{1 + \regfexp\deff/2}
        }.
        \label{eq:optim-var-sum}
    \end{equation}
    Regarding the sum of $\sigma_{\iteridx-1}(\location_\iteridx)$, using the information gain bound for the sum of predictive variances \citep{Srinivas2010, Chowdhury2017, Durand2018} yields:
    \begin{equation*}
        \sum_{\iteridx=1}^\niter \sigma_{\iteridx-1}^2(\location_\iteridx) \leq \anyconstant_\sigma \log\det(\eye + \regfactor_0\lossfactor^{-1}\Kernel_\iteridx) \in \bigo(\card{\domain}\log\niter),
    \end{equation*}
    following the argument in \autoref{rem:beta-log-t}, which is dominated by \eqref{eq:optim-var-sum}. The result in \autoref{thr:ts-regret} then follows after dropping the lower order terms, simplifying the exponent with the fact that $\regfexp > 1$, and rescaling $\delta$ with $\delta/2$ in the definition of the conditioning events.
\end{proof}



\end{document}

%% file: notation.tex
\newcommand*{\argmax}{\operatornamewithlimits{argmax}}
\newcommand*{\argmin}{\operatornamewithlimits{argmin}}

\DeclarePairedDelimiter{\norm}{\lVert}{\rVert}
\DeclarePairedDelimiter{\abs}{\lvert}{\rvert}
\DeclarePairedDelimiter{\inner}{\langle}{\rangle}
\DeclarePairedDelimiter{\paren}{(}{)}                       
\DeclarePairedDelimiter{\braces}{\lbrace}{\rbrace}          
\let\card\abs

\newcommand*{\placeholder}{\makebox[1.1ex]{$\cdot$}}
\newcommand*{\pch}{\placeholder}

\newcommand*{\field}[1]{\mathbb{\MakeUppercase{#1}}}                
\newcommand*{\set}[1]{{\mathcal{\MakeUppercase{#1}}}}			    
\newcommand*{\collection}[1]{{\mathfrak{\MakeUppercase{#1}}}}       

\newcommand*{\seqbracks}{\paren}
\newcommand*{\range}[1]{\braces{1, \dotsc, #1}}                     

\newcommand*{\functional}[1]{{\MakeUppercase{#1}}}

\newcommand*{\operator}[1]{{\MakeUppercase{#1}}}

\newcommand*{\R}{\field{R}} 
\newcommand*{\N}{\field{N}} 

\renewcommand{\vec}[1]{{\boldsymbol{\mathbf{#1}}}}
\newcommand*{\mat}[1]{\vec{\MakeUppercase{#1}}}
\newcommand*{\eye}{\mat{I}}							
\newcommand*{\transpose}{\mathsf{T}}
\newcommand*{\tr}{{\operatorname{tr}}}						

\newcommand*{\idop}{\operator{I}}

\newcommand*{\dataset}{\set{D}} 				
\newcommand*{\observation}{y} 					

\newcommand*{\obsnoise}{\epsilon}

\newcommand*{\gp}{\mathcal{GP}}

\newcommand*{\kernel}{k}

\newcommand*{\Kernel}{\mat{\kernel}}
\newcommand*{\keigval}{\kappa}

\newcommand*{\expectation}{\mathbb{E}}

\newcommand*{\variance}{\mathbb{V}}

\newcommand*{\prob}[1]{\mathbb{P}\left[ #1 \right]}

\newcommand*{\indicator}{\mathbb{I}} 				

\newcommand*{\filtration}{\collection{F}}
\newcommand*{\event}{\set{E}}

\newcommand*{\Hspace}{\set{H}} 						
\newcommand*{\feature}{\phi}

\newcommand*{\Hessian}{\operator{H}}

\newcommand*{\af}{a} 						
\newcommand*{\regret}{r}					
\newcommand*{\Regret}{\functional{R}}					

\newcommand*{\objective}{f}

\newcommand*{\slocation}{x} 
\newcommand*{\location}{\vec{\slocation}} 
\newcommand*{\domain}{\set{X}} 

\newcommand*{\dimension}{d}
\newcommand*{\locdim}{\dimension}

\newcommand*{\model}{g}
\newcommand*{\mdlspace}{\set{G}}
\newcommand*{\parameter}{\theta}
\newcommand*{\parameters}{{\parameter}}
\newcommand*{\paramdim}{M}
\newcommand*{\paramspace}{\Theta} 

\newcommand*{\weight}{w}    

\newcommand*{\paramprior}{\Pi_0}

\newcommand*{\deff}{{d_{\mathrm{eff}}}}

\newcommand*{\lossfactor}{\alpha_\loss}

\newcommand*{\error}{\varepsilon}

\newcommand*{\vol}[1]{\operatorname{vol}(#1)}

\newcommand*{\mlabel}{z}                        

\newcommand*{\vnoise}{\xi}
\newcommand*{\vncov}{\Sigma}

\newcommand*{\priorradius}{\alpha}
\newcommand*{\pradius}{\priorradius}
\newcommand*{\pfunction}{\psi_0}
\newcommand*{\ptailexp}{\zeta}
\newcommand*{\distbound}{\bound}
\newcommand*{\initevent}{\event_{\mathrm{init}}}
\newcommand*{\errorevent}{\event_{\mathrm{error}}}
\newcommand*{\paramdist}{\distance}
\newcommand*{\regfexp}{q}

\newcommand*{\loss}{\ell}
\newcommand*{\Loss}{L}
\newcommand*{\regfun}{R}
\newcommand*{\regfactor}{\lambda}
\newcommand*{\obsspace}{\set{Y}}

\newcommand{\utility}{u}

\let\threshold\thresh

\newcommand*{\iteridx}{t}

\newcommand*{\niter}{T}
\newcommand*{\nobs}{n}

\newcommand*{\nfeatures}{m}

\newcommand*{\bigo}{\set{O}}        

\newcommand*{\distance}{\rho} 

\newcommand*{\bound}{b}		

\newcommand*{\anyconstant}{c}

\newcommand*{\anyscalar}{s}

\newcommand*{\anyfunction}{h}
\newcommand*{\anyfunctions}{\operator{F}}

\newcommand*{\anyoperator}{\operator{A}}

\newcommand*{\half}{{\nicefrac{1}{2}}}

\newcommand*{\iid}{i.i.d.\xspace}

%% file: theoremenvs.tex
\declaretheorem{assumption}

\declaretheorem[numberwithin=section]{theorem}
\declaretheorem[numberlike=theorem]{lemma, corollary, proposition}

\declaretheorem[style=remark]{remark}
\declaretheorem{definition}